\newtheorem{lemma}{Lemma}
\newtheorem{theorem}{Theorem}
\newtheorem{assumption}{Assumption}
\newcommand{\SV}[1]{{\color{blue} Saeed: #1}}
\newenvironment{proof}[1][Proof]{\textbf{#1.} }{\ \rule{0.5em}{0.5em} \vspace{1ex}}
\crefname{section}{Sec.}{Secs.}
\Crefname{section}{Section}{Sections}
\Crefname{table}{Table}{Tables}
\crefname{table}{Tab.}{Tabs.}
\begin{document}

\title{When Do Curricula Work in Federated Learning?}

\author{%
Saeed Vahidian$^{1}$, Sreevatsank Kadaveru$^{1}$, Woonjoon Baek$^{1}$, Weijia Wang$^{1}$,  Vyacheslav Kungurtsev$^2$,\\
Chen Chen$^3$, Mubarak Shah$^3$, Bill Lin$^{1}$\\\\
$^1$University of California San Diego \quad $^2$Czech Technical University \quad $^3$ UCF\\
}


\maketitle
  
\begin{abstract}
\vspace{-3mm}
 

An oft-cited open problem of federated learning is the existence of data heterogeneity at the clients. One pathway to understanding the drastic accuracy drop in federated learning is by scrutinizing the behavior of the clients' deep models on data with different levels of "difficulty", which has been left unaddressed. In this paper, we investigate a different and rarely studied dimension of FL: ordered learning. Specifically, we aim to investigate how ordered learning principles can contribute to alleviating the heterogeneity effects in FL. We present theoretical analysis and conduct extensive empirical studies on the efficacy of orderings spanning three kinds of learning: curriculum, anti-curriculum, and random curriculum. We find that curriculum learning largely alleviates non-IIDness. Interestingly, the more disparate the data distributions across clients the more they benefit from ordered learning. We provide analysis explaining this phenomenon, specifically indicating how curriculum training appears to make the objective landscape progressively less convex, suggesting fast converging iterations at the beginning of the training procedure. We derive quantitative results of convergence for both convex and nonconvex objectives by modeling the curriculum training on federated devices as local SGD with locally biased stochastic gradients. Also, inspired by ordered learning, we propose a novel client selection technique that benefits from the real-world disparity in the clients. Our proposed approach to client selection has a synergic effect when applied together with ordered learning in FL.

\end{abstract}


\vspace{-6mm}
\section{Introduction}
\label{sec:intro}

Inspired by the learning principle underlying the cognitive process of humans, curriculum learning (CL) generally proposes a training paradigm for machine learning models in which the difficulty of the training task is progressively scaled, going from "easy" to "hard". Prior empirical studies have demonstrated that CL is effective in avoiding bad local minima and in improving the generalization results~\cite{bengio-curriculum, SPL-curriculum-2015}. Also interestingly, another line of work proposes the exact opposite strategy of prioritizing the harder examples first, such as~\cite{curriculum-anti-2016-ijcai, anti-is-better-2018, anti-is-better-2019}--these techniques are referred to as ``anti-curriculum". It is shown that certain tasks can benefit from anti-curriculum techniques. However, in tasks such as object detection~\cite{ST3D-object-detection, curriculum-object-detection-2018-sangineto}, and large-scale text models~\cite{curriculum-text-2020} CL is standard practice.

Although the empirical observations on CL appear to be in conflict, this has not impeded the study of CL in machine learning tasks. Certain scenarios~\cite{curriculum-Neyshabur-2021} have witnessed the potential benefits of CL. The efficacy of CL has been explored in a considerable breadth of applications, including, but not limited to, supervised learning tasks within computer vision~\cite{CurriculumNet-2018}, healthcare~\cite{curriculum-federated-2021}, reinforcement learning tasks~\cite{curriculum-Reinforcement-learning-2018}, natural language processing (NLP)~\cite{curriculum-nlp-2019} as well as other applications such as graph learning~\cite{curriculum-graph-2019}, and neural architecture search~\cite{curriculum-NAS-2020}.


Curriculum learning has been studied in considerable depth for the standard centralized training settings. However, to the best of our knowledge, our paper is the first attempt at studying the methodologies, applications, and efficacy of CL in a decentralized training setting and in particular for federated learning (FL). In FL, the training time budget and the communication bandwidth are the key limiting constraints, and as demonstrated in~\cite{curriculum-Neyshabur-2021} CL is particularly effective in settings with a limited training budget. It is an interesting proposition to apply the CL idea to an FL setting, and that is exactly what we explore in our paper (in Section~\ref{sec:motivation}).

The idea of CL is agnostic to the underlying algorithms used for federation and hence can be very easily applied to any of the state-of-the-art solutions in FL. Our technique does not require a pre-trained expert model and does not impose any additional synchronization overhead on the system. Also, as the CL is applied to the client, it does not add any additional computational overhead to the server.


Further, we propose a novel framework for efficient client selection in an FL setting that builds upon our idea of CL in FL. We show in Section~\ref{curr-on-client}, CL on clients is able to leverage the real-world discrepancy in the clients to its advantage. Furthermore, when combined with the primary idea of CL in FL, we find that it provides compounding benefits.



\textbf{Contributions:} In this paper, we comprehensively assess the efficacy of CL in FL and provide novel insights into the efficacy of CL under the various conditions and methodologies in the FL environment. 

We provide a rigorous convergence theory and analysis of FL in non-IID settings, under strongly convex and non-convex assumptions, by considering local training steps as biased SGD, where CL naturally grows the bias over the iterations, in Section~\ref{analysis-main-paper} of the main paper and Section~\ref{sec:analysis} of the \textbf{Supplementary Material (SM)}. 

We hope to provide comprehensible answers to the following six important questions: \\

\setlist{nolistsep}

\noindent \textbf{Q1:} \emph{Which of the curriculum learning paradigm is effective in FL? And under what conditions?} 

\noindent \textbf{Q2:} \emph{Can CL alleviate the statistical data heterogeneity in FL?} 

 \noindent \textbf{Q3:} \emph{Does the efficacy of CL in FL depend on the underlying client data distributions?} 

\noindent \textbf{Q4:} \emph{Whether the effectiveness of CL is correlated with the size of datasets owned by each client?} 

\noindent \textbf{Q5:} \emph{Are there any benefits of smart client selection? And can CL be applied to the problem of client selection?} 

\noindent \textbf{Q6:} \emph{Can we apply the ideas of CL to both the client data and client selection?}\\

We test our ideas on two widely adopted network architectures on popular datasets in the FL community (CIFAR-10, CIFAR-100, and FMNIST) under a wide range of choices of curricula and compare them with several global state-of-the-art baselines. We have the following findings:\\

\begin{itemize}[noitemsep,leftmargin=*]


    \item CL in FL boosts the classiﬁcation accuracy under both IID and Non-IID data distributions (Sections~\ref{effect-scoring-function}, and~\ref{Effect-of-pacing-function}).

    
    
    \item The efficacy of CL is more pronounced when the client data is heterogeneous (Section~\ref{effect-of-hetero}).

     \item CL on client selection has a synergic effect that compounds the benefits of CL in FL (Section~\ref{curr-on-client}).
     
    \item CL can alleviate data heterogeneity across clients and CL is particularly effective in the initial stages of training as the larger initial steps of the optimization are done on the ``easier'' examples which are closer together in distribution (Section~\ref{analysis-main-paper} of main paper, and Section~\ref{sec:analysis} of the SM).
    
    \item   The efficacy of our technique is observed in both lower and higher data regimes (Section~\ref{Effect-of-amount-of-data} of the SM).
    
\end{itemize}

\section{Curriculum Components}\label{sec:motivation}

Federated Learning (FL) techniques provide a mechanism to address the growing privacy and security concerns associated with data collection, all-the-while satiating the need for large datasets for training powerful machine learning models. A major appeal of FL is its ability to train a model over a distributed dataset without needing the data to be collated into a central location for training. In the FL framework, we have a server and multiple clients with a distributed dataset. The process of federation is an iterative process that involves multiple rounds of back-and-forth communication between the server and the clients that participate in the process~\cite{Vahidian-rethinking-2022-workshop}. This back-and-forth communication incurs a significant communication overhead, thereby limiting the number of rounds that are pragmatically possible in real-world applications. Curriculum learning is an idea that particularly shines in these scenarios where the training time is limited~\cite{curriculum-Neyshabur-2021}. Motivated by this idea, we define a curriculum for training in the federated learning setting. A curriculum consists of three key components:

\noindent \textbf{The scoring function:} It is a mapping from an input sample, $x_i \in \mathcal{D}=\{(x_1, y_1), (x_2, y_2),..., (x_n, y_n)\}$, to a numerical value, $s_i(x_i) \in R^+$. We define a range of scoring functions when defining a CL for the FL setting in the subsequent sections. When defining the scoring function of a curriculum in FL, we look for loss-based dynamic measures for the score that update every iteration, unlike the methods proposed in~\cite{curriculum-cscore-2020} which produce a fixed score for each sample. This is because the instantaneous score of samples changes significantly between iterations, and using a fixed score leads to an inconsistency in the optimization objectives, making training less stable~\cite{curriculum-DIH-bilmes-2020}. Also, we avoid techniques like~\cite{curriculum-humna-score-2015} which requires human annotators, as it is not practical in a privacy-preserving framework.


\noindent \textbf{The pacing function:} The pacing function $g_{\lambda}(t)$ determines scaling of the difficulty of the training data introduced to the model at each of the training steps $t$ and it selects the highest scoring samples for training at each step. The pacing function is parameterized by $\lambda = (a, b)$  where $a$ is the fraction of the training budget needed for the pacing function to begin sampling from the full training set and $b$ represents the fraction of the training set the pacing function exposes to the model at the start of training. In this paper, the full training set size and the total number of training steps (budget) are denoted by $N$ and $T$, respectively. Further, we consider five pacing function families, including exponential, step, linear, quadratic, and root (sqrt). The expressions we used for the pacing functions are shown in Table~\ref{tab:pacing-formula} of the SM.  we follow~\cite{curriculum-power-of-2019} in defining the notion of pacing function and use it to schedule how examples are introduced to the training procedure.

\noindent \textbf{The order:} Curriculum learning orders sample from the highest score (easy ones) to lowest score, anti-curriculum learning orders from lowest score to highest, and finally, random curriculum randomly samples data in each step regardless of their scores.





\begin{algorithm}
\caption{The Curriculum FL Framework }\label{alg:fedpeft}

\small
\begin{algorithmic}
\item \hspace{-6mm}
\noindent \colorbox[rgb]{1, 0.95, 1}{
\begin{minipage}{0.9\columnwidth}

\textbf{Input:} $M$ clients indexed by $m$, sampling rate $R\in(0,1]$, participating-client number $K$, communication rounds $R_C$, server model $f$ with $\theta_g$, pacing function $g_\lambda: [T] \to [N]$, scoring function $s: [N] \to \mathbb{R}$, order $o \in \{ "curriculum", "anti", "random"\}$,

\end{minipage}
}
\item \hspace{-6mm}
\colorbox[gray]{0.95}{
\begin{minipage}{0.9\columnwidth}
\item  \textbf{Server executes:}

\item     \hspace*{\algorithmicindent} initialize $f$ with $\theta$

\item     \hspace*{\algorithmicindent} \textbf{for } each round $t=0,1,2,...$ \textbf{do}

\item     \hspace*{\algorithmicindent} \quad $\mathbb{S}_t \leftarrow$ (random set of $K$ clients)

\item     \hspace*{\algorithmicindent} \quad \textbf{for} each client $m\in \mathbb{S}_t$ \textbf{in parallel do}

\item     \hspace*{\algorithmicindent} \quad \quad  broadcast $\theta^{t}_g $ to clients

\item     \hspace*{\algorithmicindent} \quad \quad  $\theta_m^{(t)} \leftarrow \text{ClientUpdate}(m,\theta^{(t)}_g)$
\item     \hspace*{\algorithmicindent} \quad \quad $ \theta^{(t+1)}_g= \sum_{m=1}^K \frac{|\mathcal{D}_m|}{\sum_{i=1}^K|\mathcal{D}_i|}\theta_m^{(t)}$ \Comment $\mathcal{D}_m$ is the set of the local data on the client with index $m$.
\item     \hspace*{\algorithmicindent} \textbf{return} $\theta_g^{t+1}$
\end{minipage}
}
\item \hspace{-6mm}
\colorbox[rgb]{0.95, 0.98, 1}{
\begin{minipage}{0.9\columnwidth}

\item  \textbf{ClientUpdate ($m, \theta^t_g$):}


\item     \hspace*{\algorithmicindent} Obtain the score of each data sample using $\theta^{t}_g $ and/or $\theta^{t}_m $ as described in section~\ref{effect-scoring-function}

\item     \hspace*{\algorithmicindent} $(\bf{x_1}, \bf{x_2},..., \bf{x_n}) \leftarrow \rm{sort}(\{{\bf{x_2},..., \bf{x_n}} \}, s, o)$

\item     \hspace*{\algorithmicindent} \textbf{for } $t=1,2,...,T$ \textbf{do}

\item     \hspace*{\algorithmicindent} \quad $\theta_m^t \leftarrow \rm{train}(\theta_g^t, \{{\bf{x_1}, \bf{x_2},..., \bf{x}}_{g(t)} \})$

\end{minipage}
}
\end{algorithmic}
\end{algorithm}

\vspace{-2mm}
\section{Experiment}
\label{sec:exp}
\noindent \textbf{Experimental setting.} To ensure that our observations are robust to the choice of architectures, and datasets, we report empirical results for LeNet-5~\cite{lecun1989backpropagation} architecture on CIFAR-10~\cite{krizhevsky2009learning} and Fashion MNIST (FMNIST)~\cite{xiao2017fashion}, and ResNet-9~\cite{he2016deep} architecture for
CIFAR-100~\cite{krizhevsky2009learning} datasets. All models were trained using SGD with momentum. Details of the implementations, architectures, and hyperparameters can be found in Section~\ref{impelement-detail} of the SM.



\noindent \textbf{Baselines and Implementation}. To provide a comprehensive study on the efficacy of CL on FL setups, we consider the predominant approaches to FL that train a global model, including FedAvg~\cite{mcmahan2017communication}, ~FedProx~\cite{fedprox-smith-2020},~SCAFFOLD~\cite{scaffold-2020}, and FedNova~\cite{FedNova-2020}. In all experiments, we assume 100 clients are available, and 10$\%$ of them are sampled randomly at each round. Unless stated otherwise, throughout the experiments, the number of communication rounds is 100, each client performs 10 local epochs with a batch size of 10, and the local optimizer is SGD. To better understand the mutual impact of different data partitioning methods in FL and CL, we consider both federated heterogeneous (Non-IID) and homogeneous (IID) settings. In each dataset other than IID data partitioning settings, we consider two different federated heterogeneity settings as in~\cite{vahidian-pacfl-2022, Vahidian-FLIS-2022}: Non-IID label skew ($20\%$), and Non-IID Dir$(\beta)$.

\subsection{Effect of scoring function in IID and Non-IID FL}
\label{effect-scoring-function}
In this section, we investigate five scoring functions. As discussed earlier, in standard centralized training, samples are scored using the loss value of an expert pre-trained model. Given a pre-trained model $f_{\theta}:\mathcal{X}\to \mathcal{Y}$, the score is defined as $ s_i(x_i)=\frac{r_i}{\sum_{i} r_i}$, where $r_i=\frac{1}{{\mathcal{L}}(y_i, f_{\theta}(x_i))}$, with ${\mathcal{L}}$ being the inference loss. In this setup, a higher score corresponds to an easier sample.

In FL~\cite{mcmahan2017communication, Vahidian-federated-2021}, a trusted server broadcasts a single initial global model, $\theta_g$, to a random subset of selected clients in each round. The model is optimized in a decentralized fashion by multiple clients who perform some steps of SGD updates $(\theta_k=\theta_g-\eta \nabla \mathcal{L}_k)$. The resulting model is then sent back to the server, ultimately converging to a joint representative model. With that in mind, in our setting, the scores can be obtained by clients either via the global model that they receive from the server, which we name as $s_G$ or by their own updated local model, named as $s_L$ or the score can be determined based on the average of the local and global model loss, named as $s_{LG}$\footnote{Since it produces very similar results to $s_G$, we skipped it.}.

We further consider another family of scoring that is based on ground truth class prediction. In particular, in each round, clients receive the global model from the server and get the prediction using the received global model and the current local model as $\hat y_G$ and $\hat y_L$, respectively. For those samples whose $\hat y_L$ and $\hat y_G$ do not match, the client tags them as hard samples and otherwise as easy ones. This scoring method is called $s_{LG}^{pred}$. Further, ground truth class prediction and scoring can be solely done by the global model or the client's local model, which end up with two other different scoring methods, namely  $s_{G}^{pred}$, and $s_{L}^{pred}$ respectively. This procedure is described in Algorithm~\ref{alg:fedpeft}.

\begin{figure}[t]

\centering
\begin{subfigure}{0.48\linewidth}
    \includegraphics[width=1\linewidth]{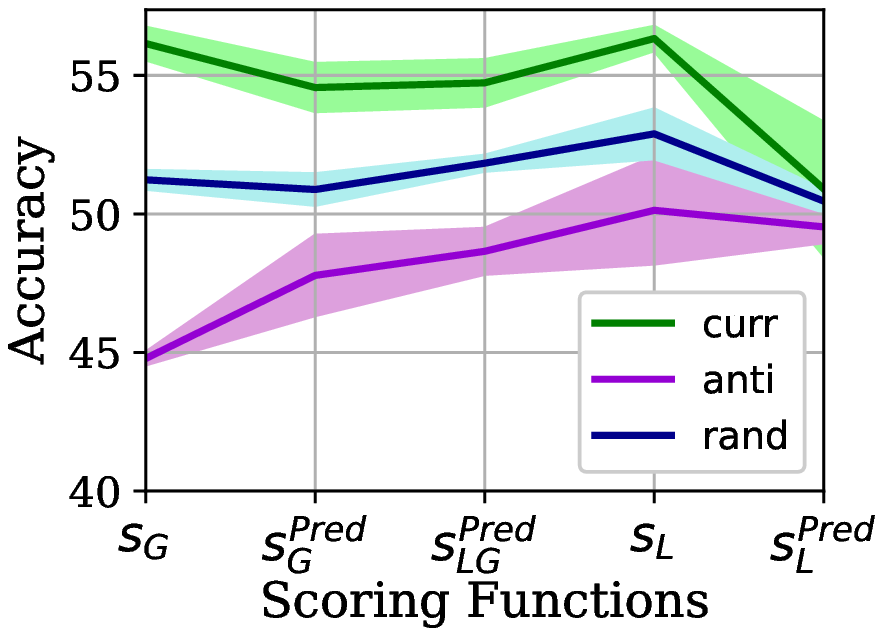}
\end{subfigure}
\begin{subfigure}{0.48\linewidth}
    \includegraphics[width=1\linewidth]{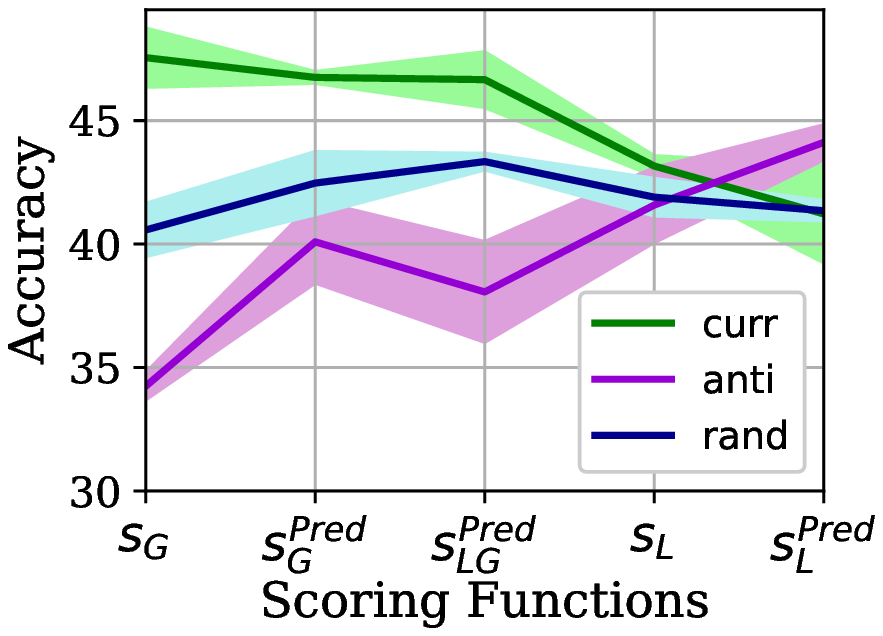}
\end{subfigure}
\vspace{-4mm}
\caption{\footnotesize \textbf{Scoring client samples based on the global model ($s_G$) provides the most accurate scores for all levels of Non-IIDness. Scoring based on the local model ($s_L^{pred}$) provides the least accurate scores, especially when data are Non-IID, as it provides the worst accuracy.} Evaluating the effect of using different scoring methods on accuracy when the clients employ curriculum, anti-curriculum, or random ordering during their local training on CIFAR-10 with IID data (left) and Non-IID (2) (right). All curricula use the linear pacing function with $a = 0.8$ and $b = 0.2$. We run each experiment three times for 100 communication rounds with 10 local epochs and report the mean and standard deviation (std) for global test accuracy. Note that the results for vanilla FedAvg for the left figure, and the right one are $52.30 \pm 0.86$, and  $41.96 \pm 1.77$, respectively.}
\vspace{-4mm}
 \label{fig:scoring-func}
 \vspace{-2mm}
\end{figure}

Fig.~\ref{fig:scoring-func} demonstrates what the impact of using these various scoring methods is on the global accuracy when curriculum, anti-curriculum, or random ordering is exploited by the clients in the order in which their CIFAR-10 examples are learned with FedAvg under IID and Non-IID (2) data partitions. The results are obtained by averaging over three randomly initialized trials and calculating the standard deviation (std).

The results reveal that \textbf{first,}\emph{ the scoring functions are producing broadly consistent results except for $s_{L}^{pred}$ for both IID and Non-IID and $s_{L}$ for Non-IID settings. $s_G$ provides the most accurate scores, thereby improving the accuracy by a noticeable margin compared to its peers.} This is quite expected, as the global model is more mature compared to the client model,
\textbf{second, }\emph{ the curriculum learning improves the accuracy results consistently across different scoring functions,} \textbf{third,} \emph{curriculum learning is more effective when the clients underlying data distributions are Non-IID}. To ensure that the latter does not occur by chance, we will delve into this point in detail in subsection~\ref{effect-of-hetero}.  Due to the superiority of $s_G$ relative to others, we set the scoring function to be $s_G$ henceforth. We will further elaborate on the precision of $s_G$ compared to an expert model in Section~\ref{expert-G-compare}.

\subsection{Effect of pacing function and its parameters in IID and Non-IID FL} \label{Effect-of-pacing-function}

In order to study the effect of different families of pacing functions along with the hyperparameters $\lambda=(a,b)$, we test the exponential, step, linear, quadratic, and root function families. We further first fix $b$ to $0.2$ and let $a \in \{ 0.1, 0.5, 0.8\}$\footnote{Note that $b \in[0,1]$. Also,  $a = 0$ or $b = 1$ is equivalent to no ordered training, i.e., standard training.}. The accuracy results are presented in Fig.~\ref{pacing-a-cifar10-iid-ordering-1}. It is noteworthy that the complement of this figure for Non-IID is presented in Fig.~\ref{pacing-a-cifar10-iid-ordering} in the SM. As is evident, for all pacing function families, the trends between the curriculum and the other orderings, i.e., (anti, random)-curriculum are markedly opposite in how they improve/degrade by sweeping $a$ from small values to large ones. The pattern for Non-IID which presented in the SM is almost similar to that of IID. Values of $a \in [0.5, 0.8]$ produce the best results. As can be seen from Fig.~\ref{pacing-a-cifar10-iid-ordering-1}, the best accuracy achieved by curriculum learning outperforms the best accuracy obtained by other orderings by a large margin. For example, in the ``linear" pacing function, the best accuracy achieved for curriculum learning when $a=0.8$ is $56.60 \pm 0.91$  which improved the vanilla results by $4\%$ while that of random when $a=0.1$ is $52.73 \pm 0.81$ and improved vanilla by $0.5\%$. Henceforth, we set $a=0.8$ and the pacing function to linear. After selecting the pacing function and $a$ the final step is to fix these two and see the impact of $b$. Now we let all curricula use the linear pacing functions with a = 0.8 and only sweep $b \in \{0.0025, 0.1, 0.2, 0.5, 0.8\}$ and report the results in Fig.~\ref{fig:cifar10-pacing-b}. Perhaps most striking is that curriculum learning tends to have smaller values of $b$ to improve accuracy, which is in contrast with (random-/anti) orderings. The performance of anti-curriculum shows a significant dependence on the value of $b$. Further, curriculum learning provides a robust benefit for different values of $b$ and it beats the vanilla FedAvg by $4-7\%$ depending upon the distribution of the data.  Henceforth, we fix $b$ to $0.2$.





\begin{figure}[t]
    \includegraphics[width=1\linewidth]{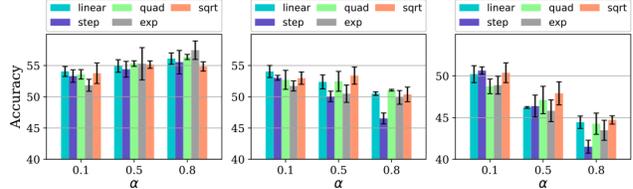}  
    \vspace{-8mm}
\caption{\footnotesize \textbf{ Bigger $a$ values provide better accuracy performance for all pacing function families on IID settings for curriculum learning. But a notable contrast can be seen with random-/anti ordering.} The effect of using different pacing function families and their hyperparameter $a$ on accuracy when the clients employ curriculum, anti-curriculum or random ordering during their local training on CIFAR-10 with IID data. We run each experiment three times for 100 communication rounds with 10 local epochs and report the mean and std for global test accuracy. The figures from left to right are for curriculum, random, and anti ones.}
\label{pacing-a-cifar10-iid-ordering-1}
\vspace{-3mm}
\end{figure}

\begin{figure}[t]
\centering
\begin{subfigure}{0.48\linewidth}
    \includegraphics[width=.97\linewidth]{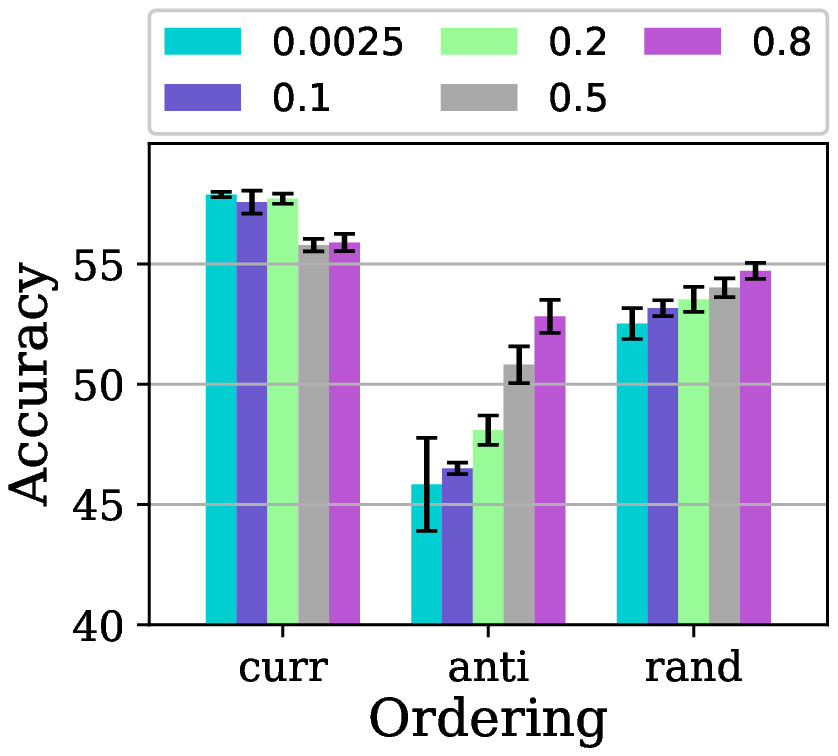}  
\end{subfigure}
\begin{subfigure}{0.48\linewidth}
    \includegraphics[width=.97\linewidth]{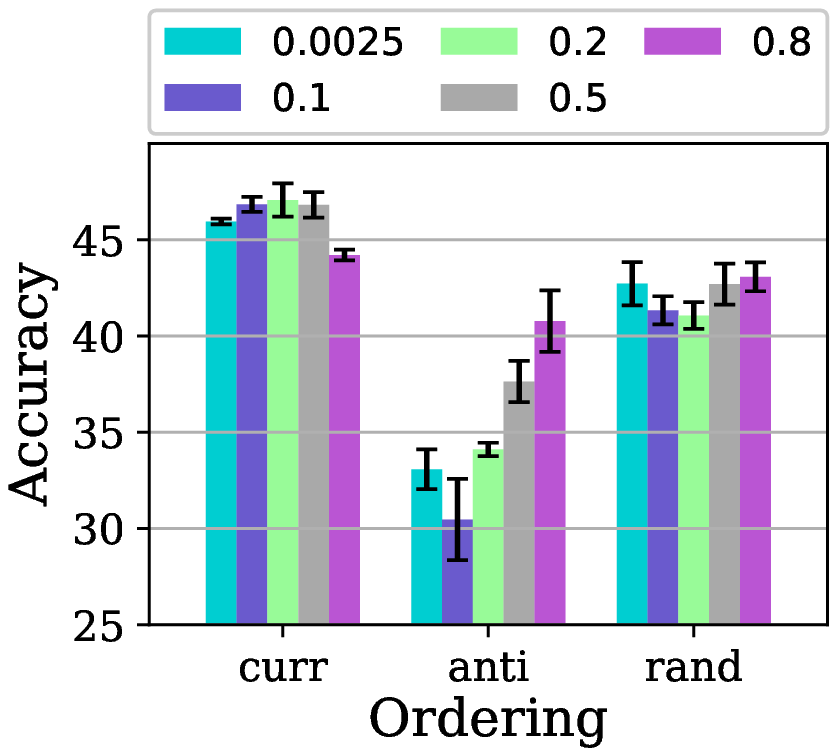}
\end{subfigure}
\vspace{-4mm}
\caption{\footnotesize \textbf{ Smaller $b$ values provide better accuracy performance for both IID and Non-IID settings as further corroborate the benefit of employing curriculum learning.}  Evaluating the effect of hyperparameter $b$ on accuracy when the clients employ curriculum, anti-curriculum, or random ordering
during their local training on CIFAR-10 with IID for FedAVg (left), and with Dir(0.05) for FedAvg (right). All curricula use the
linear pacing functions with a = 0.8. We run each experiment three times for 100 communication rounds with 10 local epochs and report the mean and std for global test accuracy.}
\label{fig:cifar10-pacing-b}
\vspace{-6mm}
\end{figure}

\subsection{Effect of level of data heterogeneity}\label{effect-of-hetero}
Equipped with the ingredients explained in the preceding section, we are now in a position to investigate the significant benefits of employing curriculum learning in FL when the data distribution environment is more heterogeneous. To ensure a reliable finding, we investigate the impact of heterogeneity in four baselines through extensive experiments on benchmark datasets, i.e., CIFAR-10, CIFAR-100, and FMNIST. In particular, we distribute the data between clients according to Non-IID Dir$(\beta)$ defined in~\cite{li2021federated}. In Dir$(\beta)$, heterogeneity can be controlled by the parameter $\beta$ of the Dirichlet distribution. Specifically, when $\beta \to \infty$ the clients' data partitions become IID, and when $\beta \to 0$ they become extremely Non-IID.

To understand the relationship between the ordering-based learning and the level of statistical data heterogeneity, we ran all baselines for different Dirichlet distribution $\beta$ values $\beta \in \{0.05, 0.2, 0.9 \}$.  The accuracy results of different baselines on CIFAR-10 while employing (anti-) curriculum, or random learning with linear pacing functions $(0.8, 0.2)$ and using $s_G$ are presented in Tables~\ref{tab:noniidness-fedavg-cifar10}, \ref{tab:noniidness-fedprox-cifar10}, \ref{tab:noniidness-fednova-cifar10}, and \ref{tab:noniidness-scaffold-cifar10}. For the comprehensiveness of the study, we will present results for CIFAR-100 respectively in Section~\ref{cifar-100-effect-of-level-hetero} of the SM.


The results are surprising: \textbf{The benefits of ordered learning are more prominent with increased data heterogeneity}. The greater the distribution discrepancy between clients, the greater the benefit to curriculum learning. 

If we consider client heterogeneity as distributional skew\cite{zhu2021federated}, then this is logical: easier data samples are those with overall lower variance, both unbiased and skew from the mean, and thus the total collection of CL-easier data samples in a dataset is more IID than the alternative. Thus, in the crucial early phases of training, the training behaves closer to FedAvg/FedProx/SCAFFOLD/FedNova under IID distributions.
Therefore, \emph{CL can alleviate the drastic accuracy drop when clients' decentralized data are statistically heterogeneous, which comes from stable training from IID samples to Non-IID ones, fundamentally improving the accuracy.} This is formalized with quantitative convergence rates in the Section~\ref{analysis-main-paper} of the main paper and in Section~\ref{sec:analysis} of the SM. 

\begin{table} 

\caption{\footnotesize \textbf{Curriculum-learning helps more when training with more severe data heterogeneity across clients}. Understanding the benefit of ordered learning  with increasing  data heterogeneity ($\beta=0.9 \to 0.05$) when clients are trained on CIFAR-10 with \texttt{FedAvg} method.}
\vspace{-3mm}
\centering
\resizebox{\columnwidth}{!}{
\begin{tabular}{lllllllll}
\toprule
Non-IIDness & Curriculum & Anti & Random  & Vanilla \\
\midrule
  

  



Dir($\beta = 0.05$) & $46.34  \pm 1.55$ & $31.16 \pm 3.16$ & $41.91 \pm 2.23$ & $39.56 \pm 4.91$ \\  
  
\midrule
    Dir($\beta = 0.2$)    & $51.09 \pm 0.39$ & $42.34 \pm 1.48$ & $46.35 \pm 1.44$ & $46.75 \pm 0.72$ \\
\midrule

  Dir($\beta = 0.9$)    & $55.36 \pm 0.69$ & $46.86 \pm 0.35$ & $52.42 \pm 0.90$ & $52.19 \pm 0.73$ \\

\bottomrule
\end{tabular}
}
\label{tab:noniidness-fedavg-cifar10}
\vspace{-2mm}
\end{table}

\begin{table} 
\caption{\footnotesize \textbf{Curriculum-learning helps more when training with more severe data heterogeneity across clients}. Understanding the benefit of ordered learning  with increasing data heterogeneity  ($\beta=0.9 \to 0.05$) when clients are trained on CIFAR-10 with \texttt{Fedprox} method.}
\vspace{-3mm}
\centering
\resizebox{\columnwidth}{!}{
\begin{tabular}{lllllllll}
\toprule
 Non-IIDness & Curriculum & Anti & Random  & Vanilla \\
\midrule
 Dir($\beta = 0.05$)   & $47.94 \pm 0.96$ & $36.08 \pm 1.52$ & $42.62 \pm 0.35$ & $41.48 \pm 0.29$ \\
\midrule

 Dir($\beta = 0.2$)    & $50.02 \pm 0.15$ & $40.92 \pm 0.90$ & $46.41 \pm 1.12$ & $46.18 \pm 0.90$ \\
\midrule

 Dir($\beta = 0.9$)    & $56.48 \pm 0.18$ & $48.37 \pm 0.91$ & $51.69 \pm 0.40$ & $53.07 \pm 1.25$ \\

\bottomrule
\end{tabular}
}
\label{tab:noniidness-fedprox-cifar10}
\vspace{-2mm}
\end{table}

\begin{table}  
\caption{\footnotesize \textbf{Curriculum learning helps more when training with more severe data heterogeneity across clients}. Understanding the benefit of ordered learning  with increasing data heterogeneity ($\beta=0.9 \to 0.05$) when clients are trained on CIFAR-10 with \texttt{FedNova} method.}
\vspace{-3mm}
\centering
\resizebox{\columnwidth}{!}{
\begin{tabular}{lllllllll}
\toprule
 Non-IIDness & Curriculum & Anti & Random  & Vanilla \\
\midrule
Dir($\beta = 0.05$)   & $ 43.73 \pm 0.09 $ & $ 28.31 \pm 1.93 $ & $ 37.81 \pm 3.06 $ & $ 31.97 \pm 0.90 $ \\
\midrule

Dir($\beta = 0.2$)    & $ 47.01 \pm 1.89 $ & $ 36.55 \pm 1.42 $ & $ 44.21 \pm 1.00 $ & $ 41.28 \pm 0.30 $ \\
\midrule

Dir($\beta = 0.9$)    & $ 50.74 \pm 0.19 $ & $ 41.76 \pm 0.90 $ & $ 48.87 \pm 0.88 $ & $ 47.230 \pm 1.80 $ \\


\bottomrule
\end{tabular}
}
\label{tab:noniidness-fednova-cifar10}
\vspace{-2mm}
\end{table}

\begin{table} 
\caption{\footnotesize \textbf{Curriculum-learning helps more when training with more severe data heterogeneity across clients}. Understanding the benefit of ordered learning  with increasing  data heterogeneity ($\beta=0.9 \to 0.05$) when clients are trained on CIFAR-10 with \texttt{SCAFFOLD} method.}
\vspace{-3mm}
\centering
\resizebox{\columnwidth}{!}{
\begin{tabular}{lllllllll}
\toprule
 Non-IIDness & Curriculum & Anti & Random  & Vanilla \\
\midrule
Dir($\beta = 0.05$)   & $45.91 \pm 1.17$ & $21.29 \pm 1.82$ & $38.27 \pm 2.19$ & $41.33 \pm 1.30$ \\
\midrule

Dir($\beta = 0.2$)    & $49.69 \pm 1.81$ & $28.69 \pm 0.60$ & $45.29 \pm 1.93$ & $46.62 \pm 0.58$ \\
\midrule

 Dir($\beta = 0.9$)    & $52.05 \pm 1.14$ & $30.75 \pm 0.79$ & $49.25 \pm 0.76$ & $50.24 \pm 0.57$ \\


\bottomrule
\end{tabular}
}
\label{tab:noniidness-scaffold-cifar10}
\vspace{-3mm}
\end{table}

 \section{Curriculum on Clients} \label{curr-on-client}
 \vspace{-2.7mm}
The technique of ordered learning presented in previous sections is designed to exploit the heterogeneity of data at the clients but is not geared to effectively leverage the \underline{heterogeneity between the clients} that, as we discuss further, naturally emerges in the real world.

In the literature, some recent works have dabbled with the idea of smarter client selection, and many selection criteria have been suggested, such as importance sampling, where the probabilities for clients to be selected are proportional to their importance measured by the norm of update~\cite{Optimal-Client-Sampling-2022}, test accuracy~\cite{client-selection-iot-2021}. The ~\cite{power-of-choice-2020} paper proposes client selection based on local loss where clients with higher loss are preferentially selected to participate in more rounds of federation, which is in stark contrast to~\cite{client-selection-2022-curr} in which the clients with a lower loss are preferentially selected. It’s clear from the literature that the heterogeneous environment in FL can hamper the overall training and convergence speed~\cite{haddadpour2019convergence, mahdavi2020-APFL}, but the empirical observations on client selection criteria are either in conflict or their efficacy is minimal. In this section, inspired by curriculum learning, we try to propose a more sophisticated mechanism of client selection that generalizes the above strategies to the FL setting.

 \subsection{Motivation}
In the real world, the distributed dataset used in FL is often generated in-situ by clients, and the data is measured/generated using a particular sensor belonging to the client. For example, consider the case of a distributed image dataset generated by smartphone users using the onboard camera or a distributed medical imaging dataset generated by hospitals with their in-house imaging systems. In such scenarios, as there is a huge variety in the make and quality of the imaging sensors, the data generated by the clients is uniquely biased by the idiosyncrasies of the sensor, such as the noise, distortions, quality, etc. This introduces variability in the quality of the data at clients, in addition to the Non-IID nature of the data. However, it is interesting to note that these effects apply consistently across all the data at the specific client.

From a curriculum point of view, as the data points are scored and ordered by difficulty, which is just a function of the loss value of that data point, these idiosyncratic distortions uniformly affect the loss/difficulty value of all the data at that particular client. Also, it is possible that the difficulty among the data points at the particular client is fairly consistent as the level of noise, quality of the sensor, etc. are the same across the data points. This bias in difficulty varies between clients but is often constant within the same client. Naturally, this introduces a \underline{heterogeneity in the clients} participating in the federation. In general, this can be thought of as some clients being "easy" and some being "difficult". \emph{ We can quantify this notion of consistency in difficulty by the standard deviation in the score of the data points at the client.}
 




\begin{figure}[t]
    \centering
     \begin{subfigure}[b]{0.3\linewidth}
         \centering
         \includegraphics[width=\textwidth]{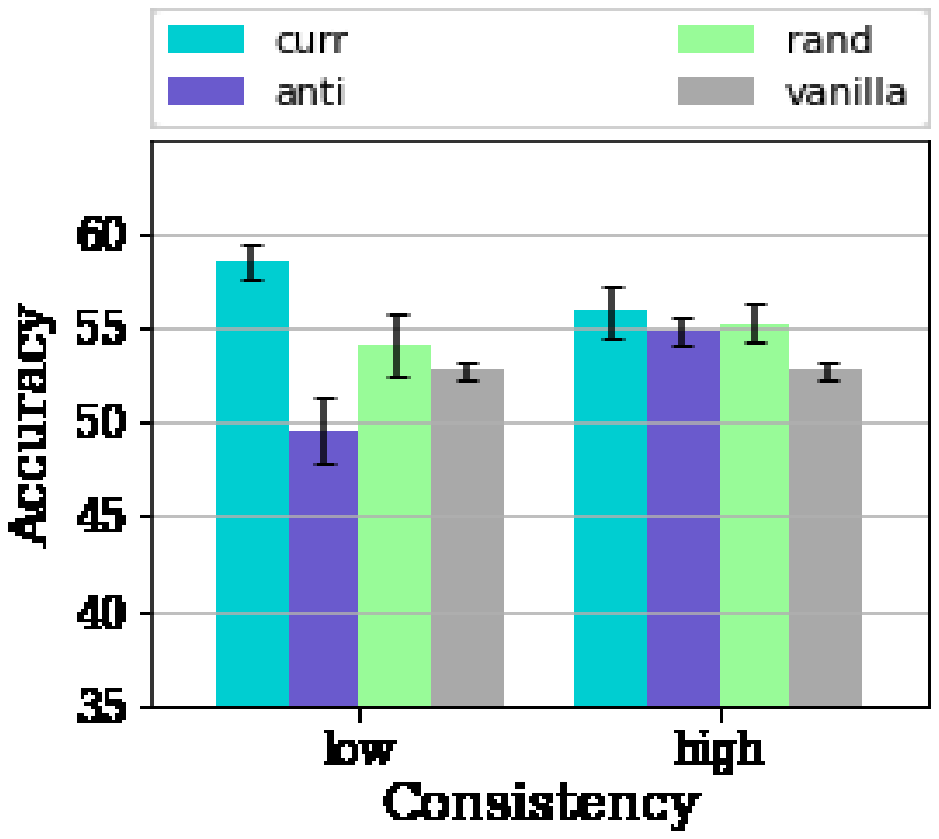}
         \caption{IID}
         \label{subfig:DC_noeffect_iid}
     \end{subfigure}
     \hfill
     \begin{subfigure}[b]{0.3\linewidth}
         \centering
         \includegraphics[width=\textwidth]{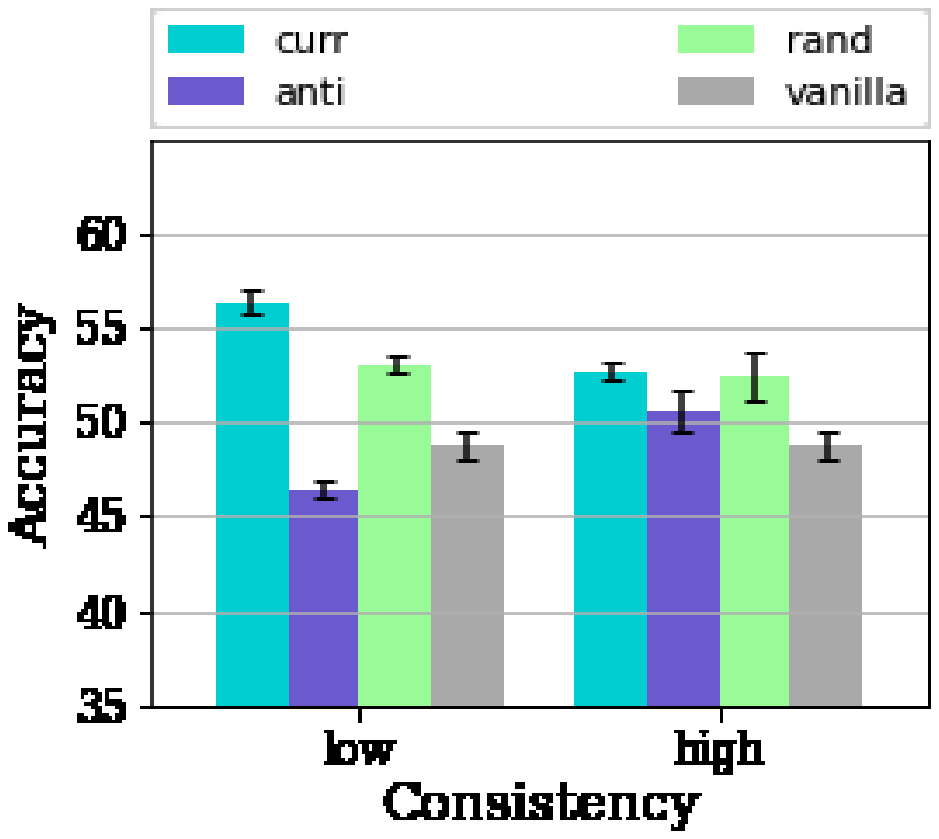}
         \caption{Dir(0.2)}
         \label{subfig:DC_noeffect_dir02}
     \end{subfigure}
     \hfill
     \begin{subfigure}[b]{0.3\linewidth}
         \centering
         \includegraphics[width=\textwidth]{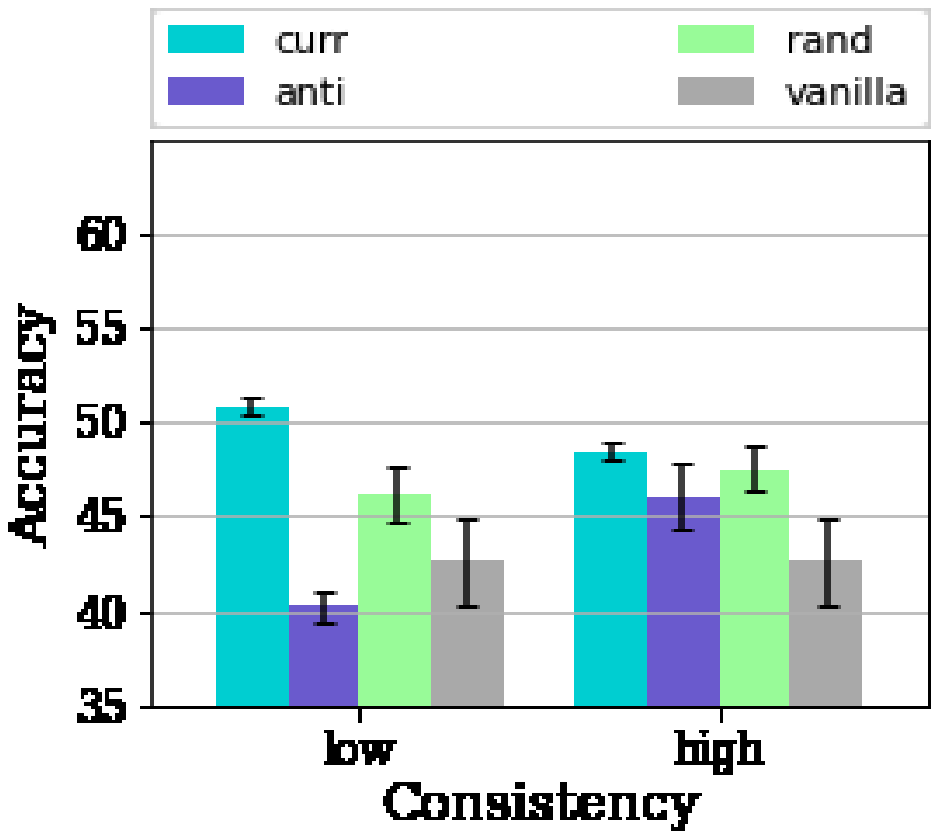}
         \caption{Dir(0.05)}
         \label{subfig:DC_noeffect_dir005}
     \end{subfigure}
    \vspace{-2mm}
\caption{\footnotesize \textbf{Consistency in data difficulty at the client hurts the efficacy of curricula.} The effect of consistency in the difficulty distribution at the client nullifies the effect of curricula. The values plotted are for FedAvg on CIFAR-10. The standard deviation values of (Low, High) consistency for the IID are $(0.52,0.01)$, Dir(0.2) are $(0.51,0.14)$, and Dir(0.05) are $(0.50,0.13)$. Note that we use Algorithm~\ref{alg:partiiton_diff_dist} to construct partitions with varying difficulty, and as detailed in Section~\ref{sec:difficulty_based_partitioning} it is not possible to control the partition difficulty value with arbitrary precision, hence the above minor variations. The Low consistency scenario is generated using $f_{ord}=0.0$ and the high consistency scenario uses $f_{ord}=1.0$.}
\label{fig:DC_noeffect_E}
\end{figure}

When the standard deviation of the intra-client data is low, i.e., when the difficulty of the data within a client is consistent, we find that the curriculum on FL behaves very differently in these kinds of scenarios. We observe the advantage of curriculum diminishes significantly and has similar efficacy as that of random curricula as shown in Fig~\ref{fig:DC_noeffect_E}. The advantage of curriculum can be defined as $A_{o} = accuracy(o) - accuracy(vanilla)$, where $o \in \{curr,anti,rand\}$.

\subsection{Client Curriculum}
\vspace{-1mm}



We propose to extend the ideas of curriculum onto the set of clients, in an attempt to leverage the heterogeneity in the clients. To the best of our knowledge, our paper is the first attempt to introduce the idea of \texttt{curriculum on clients} in an FL setting. Many curricula ideas can be neatly extended to apply to the set of clients. In order to define a curriculum over the clients, we need to define a scoring function, a pacing function, and an ordering over the scores. We define the client loss as the mean loss of the local data points at the client (Eq.~\ref{eq:client_loss}), and the client score can be thought of as inversely proportional to the loss. The ordering is defined over the client scores.\vspace{-2.7mm}

\begin{equation}
    \mathcal{L}_k =\frac{1}{\|\mathbb{D}_m\|}\sum\limits_{j}^{\|\mathbb{D}_m\|} l_j  
\label{eq:client_loss}
\end{equation}

\noindent where $m$ is the index for client, $\mathbb{D}_m$ represents the dataset at client $m$, $l_j$ is the loss of $j$th data point in $\mathbb{D}_m$. 

The pacing function, as in the case of data curricula, is used to pace the number of clients participating in the federation. Starting from a small value, the pacing function gradually increases the number of clients participating in the federation. The action of the pacing function amounts to scaling the participation rate of the clients.

The clients are scored and rank-ordered based on the choice of ordering, then a subset of size $K^{(t)}$ of the $K$ clients is chosen in a rank-ordered fashion. The value $K^{(t)}$ is prescribed by the pacing function. The $K^{(t)}$ clients are randomly batched into mini-batches of clients. These mini-batches of clients subsequently participate in the federation process. Thereby, we have two sets of curricula, one that acts locally on the client data and the other that acts on the set of clients. Henceforth we will refer to these as the \texttt{data curriculum} and the\texttt{ client curriculum}, respectively. We study the interplay of these curricula in Section~\ref{sec:ablation_study}.

\begin{figure}[t]
    \centering
     \begin{subfigure}[b]{0.3\linewidth}
         \centering
         \includegraphics[width=\textwidth]{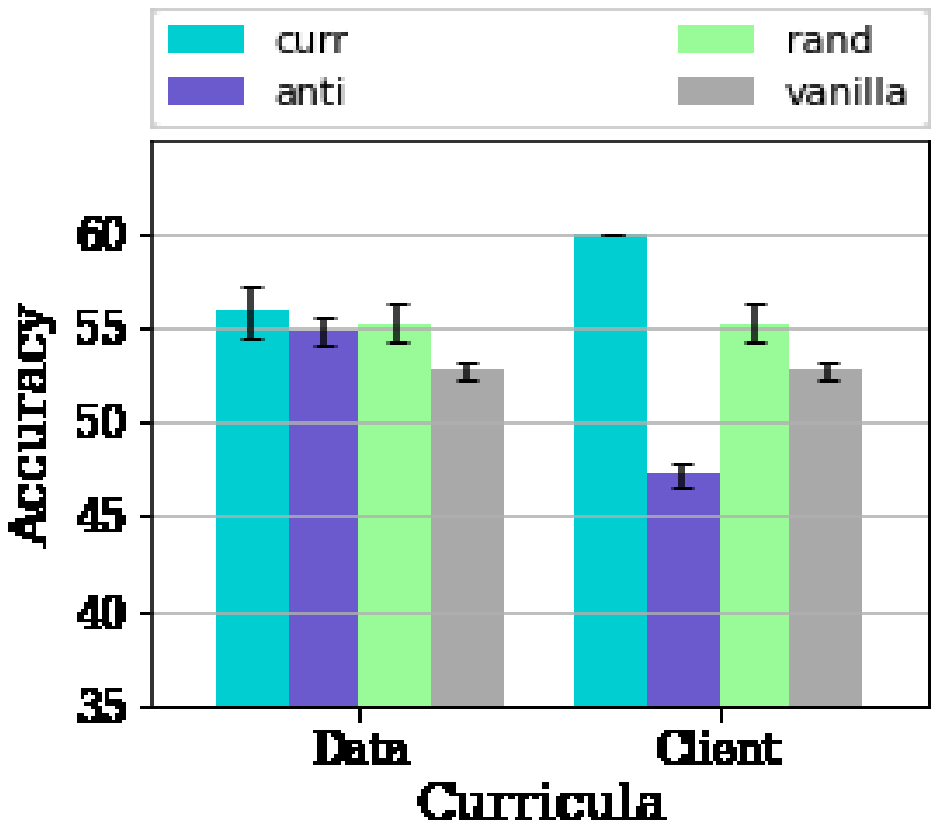}
         \caption{IID}
         \label{subfig:CC_effect_iid}
     \end{subfigure}
     \hfill
     \begin{subfigure}[b]{0.3\linewidth}
         \centering
         \includegraphics[width=\textwidth]{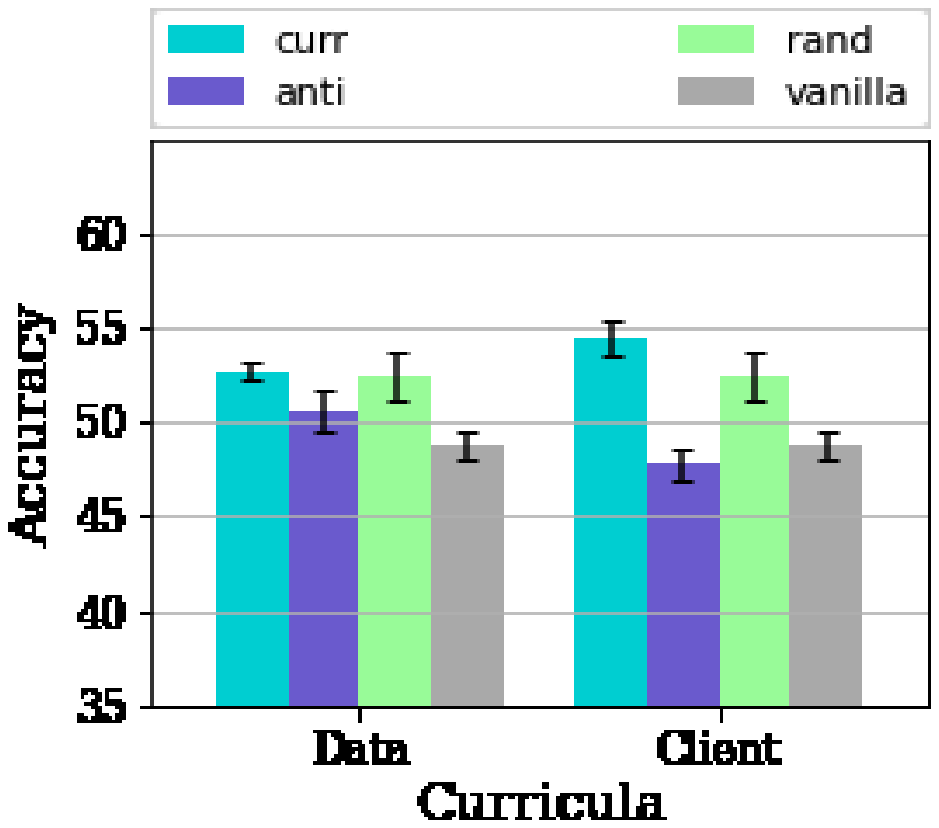}
         \caption{Dir(0.2)}
         \label{subfig:CC_effect_dir02}
     \end{subfigure}
     \hfill
     \begin{subfigure}[b]{0.3\linewidth}
         \centering
         \includegraphics[width=\textwidth]{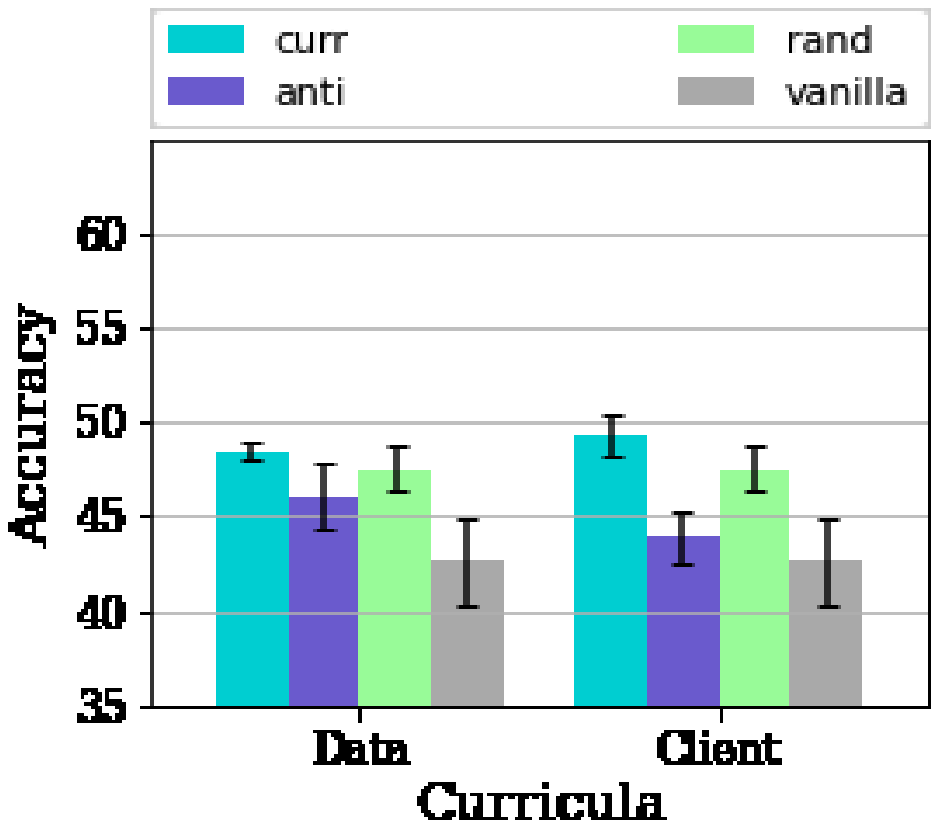}
         \caption{Dir(0.05)}
         \label{subfig:CC_effect_dir005}
     \end{subfigure}
    \vspace{-2mm}
\caption{\footnotesize \textbf{Client curriculum does not suffer from low heterogeneity in the data difficulty and is effective when data curriculum is not.} The scenario shown here is the same as the scenario with high local consistency from Fig~\ref{fig:DC_noeffect_E}. As we observe the client curriculum is able to overcome the limitations of the data curriculum.}
\label{fig:CC_effect_E}
\vspace{-3mm}
\end{figure}
Fig.~\ref{fig:CC_effect_E} confirms that the benefits of the algorithm severely depend on the data of the client having a diverse set of difficulties.  As is evident from Fig.~\ref{fig:CC_effect_E}, we are able to realize an $A_{curr}$ of $5.67-7.19$\% for the different values of Non-IIDness using our proposed client curriculum algorithm in the scenario with high consistency in the client data where the data curriculum has reduced/no benefits. \textit{This illustrates that the client curriculum is able to effectively overcome the limiting constraint of local variability in data and is able to leverage the heterogeneity in the set of clients to its benefit.}

\subsection{Difficulty based partitioning} \label{sec:difficulty_based_partitioning}


For the experiments in this section, we require client datasets ($\mathbb{D}_m$) of varying difficulty at the desired level of Non-IIDness. In order to construct partitions of varying difficulty, we need to address two key challenges: one, we need a way to accurately assess the difficulty of each of the data points, and two, we need to be able to work with different levels of Non-IIDness. To address the first challenge, we rank the data points in order of difficulty using an a priori trained expert model $\theta_{\mathcal{E}}$ that was trained on the entire baseline dataset and has the same network topology as the global model. As the expert model has the same topology as the model that we intend to train and as it is trained on the entire dataset, it is an accurate judge of the difficulty of the data points. Interestingly, this idea can be extended to be used as a scoring method for curriculum as well. We call this scoring method the expert scoring $s_{E}$. We look at this in greater detail in Section~\ref{expert-G-compare}.

To address the second challenge, a possible solution is to first partition the standard dataset into the desired Non-IID partitions using well-known techniques such as the Dirichlet distribution, followed by adding different levels of noise to the samples of the different data partitions. This would partition with varying difficulty; however, doing so would alter the standard baseline dataset, and we would lose the ability to compare the results to known baseline values and between different settings. We would like to be able to compare our performance results with standard baselines, so we require a method that does not alter the data or resort to data augmentation techniques, and we devise a technique that does just that. 

Starting with the baseline dataset, we first divide it into the desired Non-IID partitions the same as before, but then instead of adding noise to the dataset, we attempt to reshuffle the data among partitions in such a way that we create "easy" partitions and "hard" partitions. This can be achieved by ordering the data in increasing order of difficulty and distributing the data among the partitions starting from the "easy" data points, all the while honoring the Non-IID class counts of each of the partitions as determined by the Non-IID distribution. The outline is detailed in Algorithm~\ref{alg:partiiton_diff_dist}. It is noteworthy that, although we are able to generate partitions of varying difficulty, we do not have direct control over the "difficulty" of each of the partitions and hence cannot generate partitions with an arbitrary distribution of difficulty as can be done by adding noise.

\begin{algorithm}[t]
\caption{Partition Difficulty Distribution}\label{alg:partiiton_diff_dist}

\small
\begin{algorithmic}
\item \hspace{-6mm}
\noindent \colorbox[rgb]{1, 0.95, 1}{
\begin{minipage}{0.9\columnwidth}
\textbf{Input:} partitions $\{\mathbb{P}_0,\mathbb{P}_1,...,\mathbb{P}_N\}$ of the input dataset $\mathbb{D}$ of $C$ classes indexed by $c$, fraction of each partition to replace $f_{ord}\in(0,1]$, expert model $\theta_{\mathcal{E}}$

\end{minipage}
}
\item \hspace{-6mm}
\colorbox[gray]{0.95}{
\begin{minipage}{0.9\columnwidth}
\item  \textbf{Class prior of partitions and dataset:}

\item     \hspace*{\algorithmicindent} \textbf{for } each partition $i=0,1,2,...,N$ \textbf{do}
\item     \hspace*{\algorithmicindent} \quad $\mathbb{P}_i \leftarrow$ count(data points of class $c$ in partition $i$)
\item     \hspace*{\algorithmicindent}    
\item     \hspace*{\algorithmicindent} Compute loss ($\mathcal{L}$) for each data point in $\mathbb{D}$ using $\theta_{\mathcal{E}}$
\item     \hspace*{\algorithmicindent} $\mathbb{D}_{c} \leftarrow$ argsort($\mathcal{L}_c$)

\end{minipage}
}
\item \hspace{-6mm}
\colorbox[rgb]{0.95, 0.98, 1}{
\begin{minipage}{0.9\columnwidth}

\item  \textbf{Reconstitute partition:}


\item     \hspace*{\algorithmicindent} $id_c = cumsum_i(N_{i,c})$
\item 
\item     \hspace*{\algorithmicindent} Distribute $f_{ord}$ 
\item     \hspace*{\algorithmicindent} \textbf{for } each partition $i=0,1,2,...,N$ \textbf{do}
\item     \hspace*{\algorithmicindent} \quad $\mathbb{P}_i \leftarrow \mathbb{P}_i \cup$ partition($f_{ord}*N_{i,c}$ elements of $\mathbb{D}_c$ beginning at $id_{i,c}$)
\item     \hspace*{\algorithmicindent} $\mathbb{D'}_c \leftarrow$ remaining elements of $\mathbb{D}_c$
\item 
\item     \hspace*{\algorithmicindent} Distribute remaining $(1-f_{ord})$
\item     \hspace*{\algorithmicindent} \textbf{for } each partition $i=0,1,2,...,N$ \textbf{do}
\item     \hspace*{\algorithmicindent} \quad $\mathbb{P}_i \leftarrow \mathbb{P}_i \cup$ random($(1-f_{ord})*N_{i,c}$ elements of $\mathbb{D'}_c$)

\end{minipage}
}
\end{algorithmic}
\vspace{-1mm}
\end{algorithm}

\vspace{-1mm}
\subsection{Expert guided and self-guided curricula}\label{expert-G-compare}

The scoring method $s_E$, as discussed above, can also be used to guide the learning process in a curriculum learning setting. As the expert model used for scoring shares the same network topology as the global model that we intend to train, and as the expert was trained on the entire dataset, the expert-guided curricula can be thought of as a pedagogical mode of learning.

\begin{figure}[t]
\vspace{-3mm}
    \centering
     \begin{subfigure}[b]{0.48\linewidth}
         \centering
         \includegraphics[width=\textwidth]{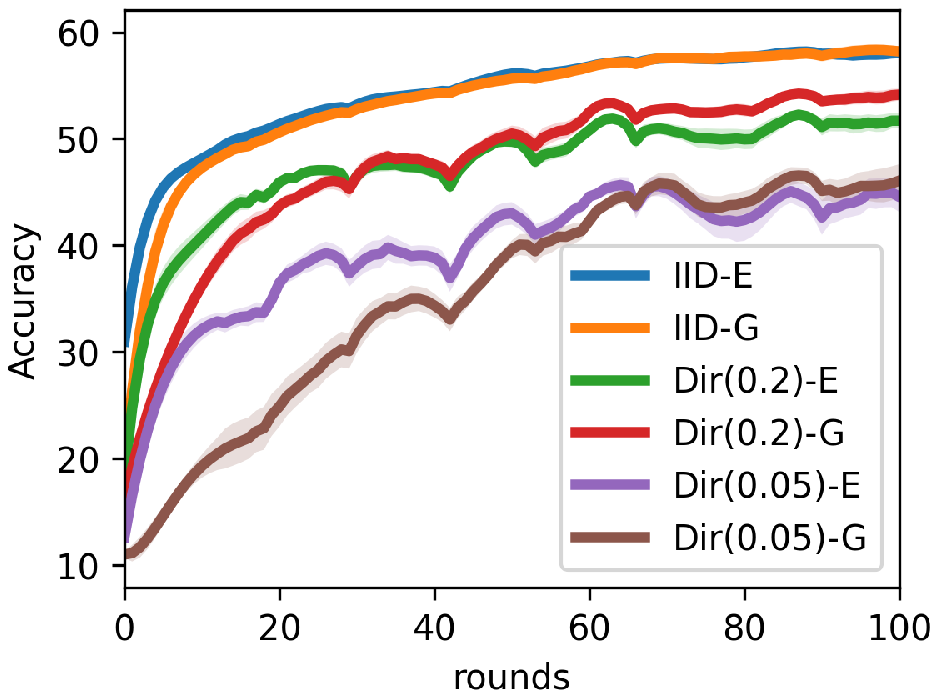}
         \vspace{-6mm}
         \caption{Client curriculum}
         \label{subfig:CC_curr_GisE}
     \end{subfigure}
     \hfill
     \begin{subfigure}[b]{0.48\linewidth}
         \centering
         \includegraphics[width=\textwidth]{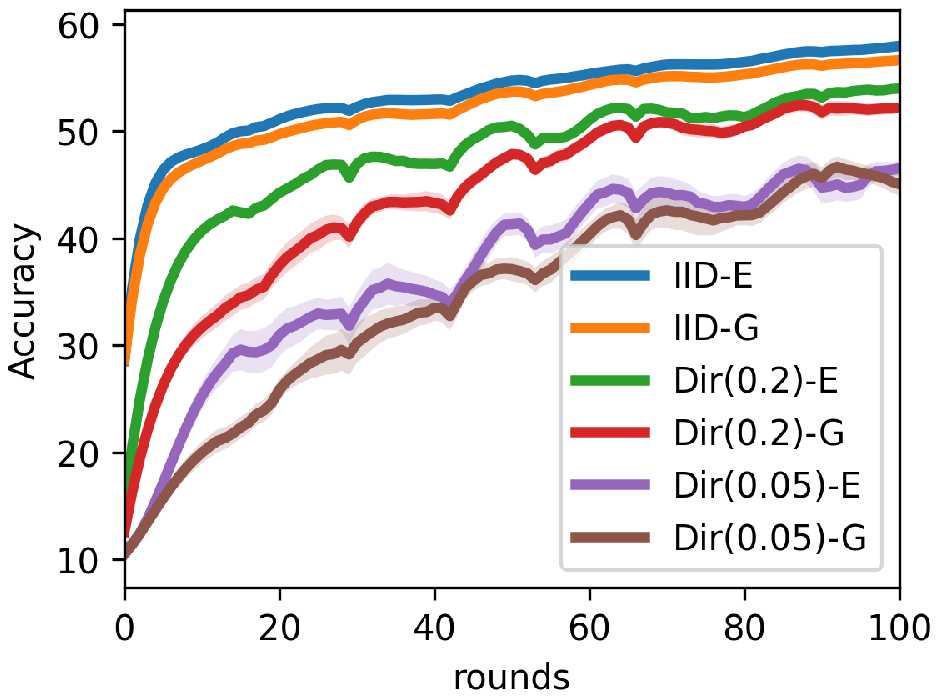}
         \vspace{-6mm}
         \caption{Data curriculum}
         \label{subfig:DC_curr_GisE}
     \end{subfigure}
    \vspace{-4mm}
\caption{\footnotesize \textbf{Effect of expert scoring $s_{E}$ and $s_{G}$ on "curr" curriculum.} Plotted here is the evolution of the global model's accuracy over the course of federation for $\beta \in \{0.05,0.2\}$ and IID with an ordering of 'curr', using FedAvg. $s_{G}$ and $s_{E}$ scoring functions have similar behavior on the Client Curricula (Left) and Data Curricula (Right).}
\label{fig:GisE_curr}
\vspace{-5mm}
\end{figure}



The global model accuracy at different rounds of federation is depicted in Fig.~\ref{fig:GisE_curr}. We see a clear trend in Fig~\ref{fig:GisE_curr} that $s_E$ outperforms $s_G$ in the initial rounds, but $s_G$ converges to $s_E$ over the rounds. Also, $s_{G}$ accuracy in the initial rounds very closely approximates the random scoring accuracy. The $s_{G}$ scoring method is a self-guided curriculum, that uses the global model. The global model is just random (noisy) in the initial rounds of federation, and hence the curricula it produces are also random, thereby closely approximating the performance of the random curriculum. As the global model is refined over time, it becomes better at determining the "true" curricula, eventually converging on the $s_E$ curve. The model trained with $s_E$ benefits from curriculum effects from the first round and thus starts strong.




\subsection{Ablation study} \label{sec:ablation_study}
\vspace{-2mm}

\begin{figure}[t]
    \centering
    \includegraphics[width=0.45\linewidth]{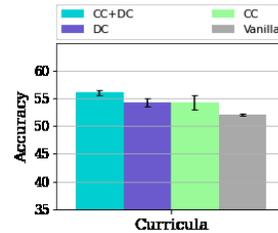}
    \vspace{-5mm}
    \caption{\footnotesize \textbf{Synergic Effect of Client and Data curricula.} CC here refers to Client Curriculum and DC refers to Data Curriculum. The figure shows the synergic effects of the curricula.}
    \label{fig:CC_w_DC_accuracy}
     \vspace{-5mm}
\end{figure}

In this section, we show the interplay between the \texttt{data curriculum} and the \texttt{client curriculum} and measure their contributions towards the global model's accuracy. As reported in Fig~\ref{fig:CC_w_DC_accuracy}, we observe that the client curriculum and the data curriculum independently outperform the baseline by about $2-3$\%, and \underline{we observe a synergic effect of the combination} that outperforms both the curricula and the baseline by about $5$\%. 


\vspace{-2mm}
\section{Theoretical Analysis and Convergence Guarantees}\label{analysis-main-paper}
\vspace{-2mm}
Now we attempt to analytically motivate the improved performance of CL in general, and for heterogeneous data in particular.

\noindent \textbf{Convergence Rate Advantages of Curriculum Learning}
Consider a standard loss function of the least squares form,\vspace{-3mm}
\begin{equation}
    \mathcal{L}(\theta,\{x_i,y_i\}) =\frac{1}{2N}\sum\limits_{i=1}^N (f(\theta,x_i)-y_i)^2 \notag
\end{equation}
Compute the generic form of the Hessian,\vspace{-3mm}
\begin{align}
\nabla^2_{\theta\theta} \mathcal{L}(\theta,\{x_i,y_i\}) =&\frac{1}{N}\sum\limits_{i=1}^N \left(\nabla_{\theta} f(\theta,x_i)\nabla_{\theta} f(\theta,x_i)^T\right. \nonumber\\ & \left.+\nabla^2_{\theta\theta} f(\theta,x_i) (f(\theta,x_i)-y_i)\right) \nonumber
\end{align}

Note that the Fisher information matrix, or Gauss-Newton term $\nabla_{\theta} f(\theta,x_i)\nabla_{\theta} f(\theta,x_i)^T$ is expected to be positive definite and independent of each samples loss value, however, the greater the magnitude of the overall loss $(f(\theta,x_i)-y_i)$ the greater the potential influence of the Hessian of the neural network model $\nabla^2_{\theta\theta} f(\theta,x_i)$ on the overall Hessian of the objective function. Thus, inherently, curriculum training makes the initial objective function more convex than otherwise. This has the clear consequence of enabling faster optimization trajectories at the beginning of the training process.

\noindent \textbf{Distribution Skew and Heterogeneous Data.} Formally, in terms of the optimization landscape and criteria, the presence of Non-IID data is often modeled in terms of the quantitative features of the appropriate model in a purely deterministic sense, a different minimizer, etc. Distributionally, however, one can observe, see e.g.~\cite{zhu2021federated}, that data discrepancy across clients is often manifested as \emph{skew}. Skew is the third moment of a random variable that indicates that there is a preferential direction in the uncertainty. As an example, image data is distributed across clients by giving the left division of an image—e.g., the left face of a cat—to one client and the right to another. See Fig.~\ref{fig:skewcurricula} for an illustration.

\begin{figure}
    \centering
    \includegraphics[scale=0.5]{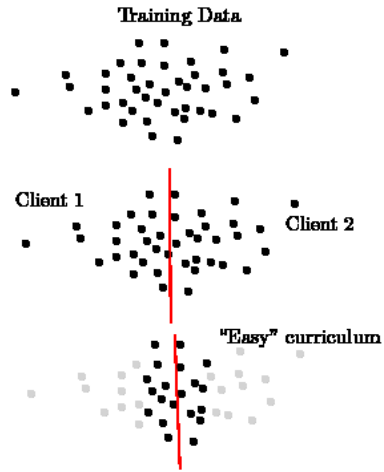}
        \vspace{-2mm}
    \caption{Illustration of skew-based heterogeneous data distribution across clients and curriculum learning mitigation thereof.}
    \label{fig:skewcurricula}
    \vspace{-6mm}
\end{figure}

A simple and transparent way to model this is to use the biased SGD framework. Specifically, there is an underlying objective function of interest $f(x)$, however, each client only has access to a biased stochastic gradient of this function. Uniquely in the case we consider and model, the bias adds up to zero across clients. We shall use the notation of~\cite{ajalloeian2020convergence} although for completeness we acknowledge the predecessor~\cite{karimi2019non}. To the best of our knowledge, we present the first analysis of federated averaging with heterogeneous data using the biased SGD framework, despite how naturally it models the training procedure given standard distributional patterns in splitting training data across clients. In the SM, we develop this model formally and provide quantitative convergence results. Informally, the overall findings can be summarized as follows:
\begin{enumerate}
    \item For strongly convex objectives, the bias introduces error to the asymptotic distance to the optimal solution, the amount of which can be decreased by appropriate annealing stepsizes according to the client or data-based CL.
    
    \item For nonconvex objectives with a bounded gradient assumption, it appears that with a sufficiently annealed stepsize, standard centralized sublinear ergodic rates of convergence to zero approximate stationarity in expectation can be recovered.
\end{enumerate}

\vspace{-2mm}
\section{Conclusion}\label{Conclusion}
\vspace{-2mm}
In this work, we provided a comprehensive study on the benefit of employing CL in FL under both homogeneous and heterogeneous setting. We further ran extensive experiments on a broad range of curricula and pacing functions over three datasets, CIFAR10, CIFAR100, and FMNIST and demonstrated that ordered learning can have noticeable benefits in federated training. Surprisingly, we found empirically that CL can be more beneﬁcial when the clients underlying data distributions are significantly Non-IID. By studying the convergence behavior of FL using a novel biased SGD model based on the observation of data heterogeneity as distributional skew, we were able to theoretically explain this phenomenon. Moreover, we proposed curriculum on clients for the first time. Our results show that the order in which clients are participated in the federation plays an important role in the accuracy performance of the global model. In particular, training the global model in a meaningful order, from the easy clients to the hard ones, using curriculum learning can provide performance improvements over the random sampling approach.\\






\setcounter{section}{0}
{\Large \textbf{Supplementary Document}}
\normalsize

The supplementary material is organized as follows: Section~\ref{prelim} presents preliminaries; Section~\ref{sec:analysis} provides the convergence of theory; Section \ref{Effect-of-amount-of-data} studies the effect of the amount of data that each client owns on its benefit from CL; in Section \ref{Effect-of-pacing-function-sup} additional experiments are provided to evaluate the effect of pacing function and its parameters in IID and non-IID FL; Section~\ref{cifar-100-effect-of-level-hetero} studies the correlation between the ordering based learning and the level of statistical data heterogeneity on CIFAR-100; 
Section~\ref{sec:related-work} presents the related work to this paper; 
Section~\ref{impelement-detail} contains implementation details; and finally, Section~\ref{Conclusion} concludes the paper.


\section{Preliminary} \label{prelim}
The five function families used throughout, including exponential, step, linear, quadratic, and root, and their expressions can be seen in Table~\ref{tab:pacing-formula} and Fig.~\ref{fig:pacing-families}.

\begin{table}[H]  
\caption{\footnotesize The five families of pacing functions we employed in this paper. The parameter $a$ determines the fraction of training time until all data is used. Parameter $b$ sets the initial fraction of the data used.}
\vspace{-3mm}
\centering
\resizebox{0.6\columnwidth}{!}{
\begin{tabular}{ll}
\toprule
Pacing Function & Expression  \\
\midrule
Exponential     & $Nb + \frac{N(1-b)}{e^{10}-1} (e^{\frac{10t}{aT}}-1)$  \\
Step            & $Nb + N{\lfloor \frac{t}{aT} \rfloor}$  \\
Root (Sqrt)     & $Nb + \frac{N-b}{\sqrt{aT}} \sqrt{t}$  \\
Linear          & $Nb + \frac{N-b}{aT} t$  \\
Quadratic       & $Nb + \frac{N-b}{{(aT)}^2} t^2$  \\
\bottomrule
\end{tabular}
}
\label{tab:pacing-formula}
\end{table}

\begin{figure}[H]
    \includegraphics[width=1\linewidth]{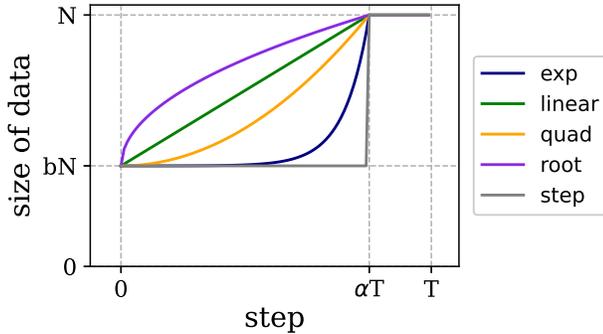}  
    \vspace{-8mm}
\caption{\footnotesize Pacing function curves of different families are used throughout the paper. As shown, the hyperparameter $\alpha$ specifies the fraction of the training step until all data is used in training. The hyperparameter $b$ determines the initial fraction of the data used in training.}
\label{fig:pacing-families}
\end{figure}

\section{Convergence Theory}
\label{sec:analysis}
In this section, we give a review of the literature on Federated Averaging and the associated convergence guarantees, presenting an analysis of how we expect these to be modified by the introduction of curriculum learning.

\paragraph{Curricula, Data Dissimilarity and Convergence}
The score of the data samples is based on the server's parameter vector $\theta_g$. Naturally, this approach creates a significant association between the degree of statistical dissimilarity of the data at each client with the training difficulty score used to rank data samples for curriculum learning. So we can safely purport that CL, for non-iid data, results in a level of dissimilarity that increases with the iteration $t$. 

To understand how this affects the convergence, we review a few standard works and study how increasing heterogeneity with the iteration number affects the convergence guarantees.

To begin with, the state of the art in convergence theory of Federated Averaging (or Local SGD) for convex objectives is given, to the best of our knowledge, in~\cite{khaled2020tighter}.

Here the main result of interest is~\cite[Theorem 5]{khaled2020tighter}, for which the objective optimality gap is bounded by,
\[
\mathbb{E}[f(\theta_T)-f(\theta^*)]\le \frac{C_1}{\gamma T}+C_2\gamma \sigma^2+C_3 \gamma^2 \sigma^2
\]
where the variance $\sigma$ is proportional to the heterogeneity. It can be seen from the convergence theory that the bound changes to, with $\sigma_t$ iteration dependent,
\[
\mathbb{E}[f(\theta_T)-f(x^*)]\le \frac{C_1}{\gamma T}+C_2\gamma \sum\limits_{t=0}^T\sigma_t^2+C_3 \gamma^2 \sum\limits_{t=0}^T\sigma_t^2
\]
suggesting an overall better convergence quality for any given iteration, since we expect $\sigma_t<\sigma$ up until $t=T$, i.e., early iterations generate better accuracy than otherwise.

In regards to nonconvex objectives, which are of course more faithful to the practice of training neural networks, to the best of our knowledge the state of the art in theoretical convergence guarantees for local SGD is given in~\cite{haddadpour2019convergence}. There, a notion of gradient similarity is presented,
\[
\Lambda(\theta,q) = \frac{\sum\limits_{m=1}^Mq_m \|\nabla f_m(\theta)\|^2}{\left\|\sum\limits_{m=1}^Mq_m \nabla f_m(\theta)\right\|^2}
\]
and assuming a bound $\lambda$ on this term, $\lambda$ does not appear directly in the convergence bounds in~\cite[Theorem 4.2 and Theorem 4.4]{haddadpour2019convergence} (respectively for the objective satisfying the PL condition and the general case). However, the number of local steps, which they denote as $E$, (i.e. the number of SGD steps in \texttt{ClientUpdate} in Algorithm~\ref{alg:fedpeft}) depends on $E\propto 1/\lambda$, meaning the greater the dissimilarity and the fewer local iterations are permitted to ensure convergence, a net increase in the total number of communications necessary.

The use of the FedProx objective can also be analyzed through the lens of iterate-varying dissimilarity. Considering~\cite[Theorem 4]{li2020federated} we have that with,
\[
\rho_t = \frac{1}{\mu}-\bar{\rho}(B_t,\gamma,\mu),\,\bar{\rho}(B_t,\gamma,\mu)=O(B_t)O(\gamma)O(1/\mu)
\]
and, with $S_t$ devices chosen at iteration $t$
\[
\mathbb{E}[f(x_{t+1}|S_t]-f(x_t)\le -\rho_t \|\nabla f(x_t)\|^2
\]
and thus with Curriculum training, we see increasing $B_t$ and thus decreasing $\rho_t$, and thus, again, we shall expect to see initial faster and then gradually slower convergence.

\paragraph{Local SGD Model and Convergence Analysis}
Consider the Local SGD framework as presented
in Algorithm~\ref{alg:localsgd}.

\begin{algorithm}
\caption{Local SGD Model of Algorithm~\ref{alg:fedpeft} }\label{alg:localsgd}

\small
\begin{algorithmic}
\item \hspace{-6mm}
\noindent \colorbox[rgb]{1, 0.95, 1}{
\begin{minipage}{0.9\columnwidth}

\textbf{Input:} $M$ clients indexed by $m$, participating-client number $Q$, communication rounds $T$, local optimization steps $J$, server model $f$ with parameters $\theta_g$

\end{minipage}
}
\item \hspace{-6mm}
\colorbox[gray]{0.95}{
\begin{minipage}{0.9\columnwidth}
\item  \textbf{Server executes:}

\item     \hspace*{\algorithmicindent} initialize $f$ with $\theta_g$

\item     \hspace*{\algorithmicindent} \textbf{for } each round $t=0,1,2,...,T$ \textbf{do}

\item     \hspace*{\algorithmicindent} \quad $\mathbb{S}_t \leftarrow$ (random set of $Q$ clients)

\item     \hspace*{\algorithmicindent} \quad \textbf{for} each client $q\in \mathbb{S}_t$ \textbf{in parallel do}

\item     \hspace*{\algorithmicindent} \quad \quad  broadcast $\theta^{t}_g $ to clients as $\theta^{(t,0)}_k$

\item     \hspace*{\algorithmicindent} \quad \quad \textbf{for} $j=0,1,2,...,J$

\item     \hspace*{\algorithmicindent} \quad \quad \quad   Sample $g_k^{(t,j)}\sim \nabla f(\theta_k^{(t,j)},\mathcal{D}_k)$

\item     \hspace*{\algorithmicindent} \quad \quad \quad   $\theta_{k}^{(t,j+1)} \leftarrow \theta_k^{(t,j)}-\alpha^{(t,j)} g_k^{(t,j)}$

\item     \hspace*{\algorithmicindent} \quad \quad $ \theta^{(t+1)}_g= \sum_{k=1}^K \frac{|\mathcal{D}_k|}{\sum_{i=1}^K|\mathcal{D}_i|}\theta_k^{(t,J)}$ 

\item     \hspace*{\algorithmicindent} \textbf{return} $\theta_g^{t+1}$
\end{minipage}
}
\end{algorithmic}
\end{algorithm}

We formalize the notion of distributional skew by making the following assumption on the bias structure associated with each stochastic gradient computation:
\begin{assumption}\label{as:bias}
It holds that $g_k^{(t,j)}$ satisfies,
\begin{equation}\label{eq:biasedsgd}
    g_k^{(t,j)} = \nabla f(\theta^{(t,j)}_k)+b_k^{(t,j)}(\theta^{(t,j)}_k)+n_k^{(t,j)}(\theta^{(t,j)}_k,\xi^{(t,j)}_k)
\end{equation}
where $\|b_k^{(t,j)}(\theta^{(t,j)}_k)\|^2\le B^{(t,j)}$ for all $k$, and, for all $\theta$, 
\begin{equation}\label{eq:sumbiaszero}
    \sum\limits_{k\in S_t} b_k^{(t,j)}(\theta) = 0
\end{equation}
and $\xi^{(t,j)}_k$ is a random variable satisfying,
\begin{equation}\label{eq:addunbiasnoise}
    \mathbb{E}_{\xi}[n_k^{(t,j)}(\theta^{(t,j)}_k,\xi^{(t,j)}_k)] = 0
\end{equation}
\end{assumption}
We note that,
\[
\begin{array}{l}
B^{(0,0)}=B^{(0,1)}=...=B^{(0,J)}< 
B^{(1,0)}=B^{(1,1)}=...\\ =B^{(1,J)} < ...
< B^{(t,0)}=B^{(t,1)}=...=B^{(t,J)}
< B^{(t+1,0)} \\ =B^{(t+1,1)}=...=B^{(t+1,J)}
< ... < B^{(T,0)}=B^{(T,1)} \\ =...=B^{(T,J)}
\end{array}
\]
for \textbf{client based} curriculum training, and
\[
\begin{array}{l}
B^{(0,0)}<B^{(0,1)}<...=B^{(0,J)}= 
B^{(1,0)}<B^{(1,1)}<...\\ <B^{(1,J)} = B^{(2,0)} <  ...
 B^{(t,0)}<B^{(t,1)}<...\\ <B^{(t,J)}
= B^{(t+1,0)} <B^{(t+1,1)}<...\\ <B^{(t+1,J)}
 ... B^{(T,K-1)}< B^{(T,J)}
\end{array}
\]
for \textbf{data based} curriculum training.

Now we present two results as depending on the conditions applying to the functions characterizing the optimization. In the first case, we shall consider strongly convex objectives, as characterizing least squares empirical risk minimization of, e.g., linear models. In this scenario, we permit the variance to grow with the parameter size, i.e., we do not assume bounded gradients.
\begin{theorem}\label{th:convconvex}
Assume that
\begin{itemize}
    \item $f$ is strongly convex with convexity parameter $\mu > 0$ 
    \item $\nabla f$ is Lipschitz continuous with Lipschitz constant $L$
    \item the noise variance satisfies,
    \[
    \begin{array}{l}
    \mathbb{E}_{\xi}\left[\left\|n_k^{(t,j)}(\theta^{(t,j)}_k,\xi^{(t,j)}_k)\right\|^2\right] \\ \qquad \le M\left\|\nabla f(\theta^{(t,j)}_k)+b_k^{(t,j)} (\theta^{(t,j)}_k)\right\|^2+\sigma^2
    \end{array}
    \]
    \item For all $t,j$ we have ,
    \[
    \alpha^{(t,j)}\le \frac{1}{4(3+2M)L}
    \]
\end{itemize}
Then it holds that the distance to the solution satisfies, after each averaging step,
\[
\begin{array}{l}
\mathbb{E}\left\|\hat{\theta}^{(T,0)}-\theta^*\right\|^2 \le 
\prod\limits_{t=1}^T 
\prod\limits_{j=0}^J (1-\alpha^{(t,j)}\mu/2)\|\hat{\theta}^{(0,0)}-\theta^*\|^2\\
\qquad +\sum\limits_{t=1}^T 
\sum\limits_{j=0}^J \frac{2(\alpha^{(t,j)})^2[ L ((3+2M)B^{(t,j)}+3\sigma^3]}{Q} \\
\qquad +\sum\limits_{t=1}^T 
\sum\limits_{j=0}^J\frac{2\alpha^{(t,j)} L (B^{(t,j)})^2/\mu)}{Q}
\end{array}
\]
\end{theorem}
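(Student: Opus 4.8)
The plan is to run the standard virtual-iterate (perturbed-iterate) analysis for Local SGD, adapted to the biased-gradient model of Assumption~\ref{as:bias}. Take uniform weights (i.e., assume equal local dataset sizes, so the averaging weight is $1/Q$; the weighted case is analogous) and define the averaged parameter $\hat\theta^{(t,j)}=\frac{1}{Q}\sum_{k\in S_t}\theta_k^{(t,j)}$ for every local step $j=0,\dots,J$ and round $t$. Since all participating clients begin round $t$ from the common server model and then take independent steps $\theta_k^{(t,j+1)}=\theta_k^{(t,j)}-\alpha^{(t,j)}g_k^{(t,j)}$, the average obeys the single recursion $\hat\theta^{(t,j+1)}=\hat\theta^{(t,j)}-\frac{\alpha^{(t,j)}}{Q}\sum_{k\in S_t}g_k^{(t,j)}$, and $\hat\theta^{(t,J)}=\hat\theta^{(t+1,0)}$; so it suffices to establish a one-step contraction-plus-error inequality for $\mathbb{E}\|\hat\theta^{(t,j+1)}-\theta^*\|^2$ and then chain it over $j$ and $t$.

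For the one-step estimate, I would expand $\|\hat\theta^{(t,j+1)}-\theta^*\|^2$ and take the conditional expectation over the fresh noise variables $\{\xi^{(t,j)}_k\}_{k\in S_t}$. By Assumption~\ref{as:bias} the averaged gradient splits into $\frac1Q\sum_k\nabla f(\theta_k^{(t,j)})$, the averaged bias $\bar b^{(t,j)}:=\frac1Q\sum_k b_k^{(t,j)}(\theta_k^{(t,j)})$, and a mean-zero term. The inner-product term $-2\alpha^{(t,j)}\langle\cdot,\hat\theta^{(t,j)}-\theta^*\rangle$ is handled by (i) $L$-smoothness to pass from $\nabla f(\theta_k^{(t,j)})$ to $\nabla f(\hat\theta^{(t,j)})$ up to a client-drift error $\|\theta_k^{(t,j)}-\hat\theta^{(t,j)}\|$, (ii) $\mu$-strong convexity with $\nabla f(\theta^*)=0$ to extract a $-2\alpha^{(t,j)}\mu\|\hat\theta^{(t,j)}-\theta^*\|^2$ descent (or, to keep the stepsize restriction involving only $L,M$, the sharper pair $\langle\nabla f(x),x-\theta^*\rangle\ge f(x)-f^*+\tfrac\mu2\|x-\theta^*\|^2$ and $\|\nabla f(x)\|^2\le 2L(f(x)-f^*)$), and (iii) Young's inequality on the $\bar b^{(t,j)}$ contribution, yielding the $O(\alpha^{(t,j)}B^{(t,j)}/\mu)$ additive error after $\|\bar b^{(t,j)}\|^2\le B^{(t,j)}$ — here the zero-sum property~\eqref{eq:sumbiaszero} is essential, since it is what prevents this term from also leaving a $\mu$-free, non-vanishing piece, leaving only the drift-type residual $\bar b^{(t,j)}-\frac1Q\sum_k b_k^{(t,j)}(\hat\theta^{(t,j)})$. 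The quadratic term is bounded by $(\alpha^{(t,j)})^2\frac1Q\sum_k\mathbb{E}\|g_k^{(t,j)}\|^2$ (the source of the $1/Q$ in the statement), then the variance assumption gives $\mathbb{E}_\xi\|g_k^{(t,j)}\|^2\le(1+M)\|\nabla f(\theta_k^{(t,j)})+b_k^{(t,j)}\|^2+\sigma^2$, which is split into a $\|\nabla f\|^2$ part (tied back to $\|\hat\theta^{(t,j)}-\theta^*\|^2$ via smoothness and $\|\theta_k^{(t,j)}-\theta^*\|^2\le 2\|\theta_k^{(t,j)}-\hat\theta^{(t,j)}\|^2+2\|\hat\theta^{(t,j)}-\theta^*\|^2$), a $B^{(t,j)}$ part, and a $\sigma^2$ part.

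Collecting everything gives $\mathbb{E}\|\hat\theta^{(t,j+1)}-\theta^*\|^2\le(1-\alpha^{(t,j)}\mu/2)\,\mathbb{E}\|\hat\theta^{(t,j)}-\theta^*\|^2 + O((\alpha^{(t,j)})^2 B^{(t,j)}/Q) + O((\alpha^{(t,j)})^2\sigma^2/Q) + O(\alpha^{(t,j)}B^{(t,j)}/\mu Q) + (\text{drift terms})$, where the stepsize bound $\alpha^{(t,j)}\le\frac1{4(3+2M)L}$ is precisely what absorbs the $(\alpha^{(t,j)})^2 L$-order terms arising from the quadratic part into the linear descent and so preserves the factor $1-\alpha^{(t,j)}\mu/2$. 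Unrolling this recursion over $j=0,\dots,J$ within a round and then over $t=1,\dots,T$ (using $\hat\theta^{(t,J)}=\hat\theta^{(t+1,0)}$) telescopes the contraction factors into $\prod_{t=1}^T\prod_{j=0}^J(1-\alpha^{(t,j)}\mu/2)$ multiplying $\|\hat\theta^{(0,0)}-\theta^*\|^2$ and turns the per-step errors into the two double sums in the statement.

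I expect the main obstacle to be the client-drift bookkeeping. Because the $b_k^{(t,j)}(\cdot)$ sum to zero only as functions, the residual $\bar b^{(t,j)}$ and the gradient-heterogeneity term genuinely depend on $\|\theta_k^{(t,j)}-\hat\theta^{(t,j)}\|$, which is $0$ at $j=0$ and is built up within the current round by the biased, noisy local steps, so it must itself be controlled by a nested recursion, growing roughly like $\alpha\sqrt{B}$ times the number of local steps taken so far, plus extra $\alpha\sigma$ and $\alpha\|\nabla f\|$ contributions. Folding these drift quantities cleanly back into the $O((\alpha^{(t,j)})^2 B^{(t,j)}/Q)$, $O((\alpha^{(t,j)})^2\sigma^2/Q)$ and $O(\alpha^{(t,j)}B^{(t,j)}/\mu Q)$ buckets, without degrading the contraction factor, is the delicate step; the rest is Young's inequality, the variance bound, and telescoping.
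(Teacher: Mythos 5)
Your proposal follows essentially the same route as the paper's proof: a perturbed-iterate analysis in the style of Stich and Khaled et al., with a virtual averaged sequence $\hat{\theta}^{(t,j)}$, a second-moment bound on the biased stochastic gradients producing the $(3+2M)$ constants and the $1/Q$ factor, a one-step $(1-\alpha^{(t,j)}\mu/2)$ contraction, Young's inequality on the bias term yielding the $(B^{(t,j)})^2/\mu$ contribution, a nested recursion controlling the client-drift term $\frac{1}{Q}\sum_k\|\hat{\theta}^{(t,j)}-\theta^{(t,j)}_k\|^2$, and telescoping over $j$ and $t$. The obstacle you flag at the end --- that the biases sum to zero only pointwise as functions, so the residual depends on the within-round iterate spread --- is exactly the issue the paper's drift lemma is designed to absorb, so your plan matches the paper's argument step for step.
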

In studying the form of this result, we note that the overall convergence
rate and error resembles the original with an important caveat in regards to the 
error on account of the bias term. First, the bias term adds an error proportional to
the stepsize, thus yielding an asymptotic error bounded from below with the bias.
Second, the stepsize can be used to mitigate the error from the bias terms. Indeed, with,
e.g., data-based curriculum, if $B^{(t,j)}=O(j^{1/4})$ then $\alpha^{(t,j)}=O(t^{-1}j^{-1/4})$ would mitigate the growing error. It is clear that the standard practice of diminishing stepsizes will result in a lower total error at each iteration for curriculum compared to anti-curriculum. Standard Local SGD guarantees are not preserved regardless, however, with the asymptotic bias depending on the total degree of data heterogeneity, summed in this weighted manner throughout the optimization procedure.

\textbf{Nonconvex Objectives}
Now we consider the general case of nonconvex objectives without any additional conditions regarding the growth properties of the objective function to permit generality encompassing the functional properties of neural networks. Using the biased SGD framework and inspired by the structure of the convergence theory of~\cite{zhou2018convergence}, we study the effect of the associated gradient estimate errors.
\begin{theorem}\label{th:convnonconvex}
Assume that
\begin{itemize}
    \item $\|\nabla f\|$ is uniformly bounded by $G$
    \item $\nabla f$ is Lipschitz continuous with Lipschitz constant $L$
    \item the noise variance satisfies,
    \[
    \mathbb{E}_{\xi}\left[\left\|n_k^{(t,j)}(\theta^{(t,j)}_k,\xi^{(t,j)}_k)\right\|^2\right] \le \sigma^2
    \]
    \item $f$ is lower bounded by $f_*$
\end{itemize}
Then we obtain the ergodic rate,
\[
\begin{array}{l}
\sum\limits_{t=0}^T\sum\limits_{j=0}^J \|\nabla f(\theta^{(t,0)}\|^2 \le Q(f(\theta^0)-f^*)\\ \qquad +2\sum\limits_{t=0}^T\sum\limits_{j=0}^J \alpha^{(t,j)}\left(\alpha^{(t,j)}+\sum\limits_{l=j}^J \alpha^{(t,l)}\right) LG^2
\end{array}
\]
\end{theorem}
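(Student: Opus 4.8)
The plan is a descent-lemma telescoping argument of the type standard for nonconvex SGD, adapted to the Local-SGD-with-biased-gradient model of Algorithm~\ref{alg:localsgd}, with the zero-sum bias condition \eqref{eq:sumbiaszero} doing the essential work. First I would introduce the averaged virtual iterate $\bar\theta^{(t,j)} = \frac{1}{Q}\sum_{k\in S_t}\theta_k^{(t,j)}$, observing that (under equal aggregation weights, i.e.\ equal local dataset sizes) $\bar\theta^{(t,0)} = \theta_g^{(t)} =: \theta^{(t,0)}$, $\bar\theta^{(t,J)} = \theta^{(t+1,0)}$, and $\bar\theta^{(t,j+1)} = \bar\theta^{(t,j)} - \alpha^{(t,j)}\bar g^{(t,j)}$ with $\bar g^{(t,j)} = \frac{1}{Q}\sum_{k\in S_t} g_k^{(t,j)}$. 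Applying $L$-smoothness of $f$ between consecutive virtual iterates, but anchoring the linear term at the round starting point $\theta^{(t,0)}$, gives
\[
f(\bar\theta^{(t,j+1)}) \le f(\bar\theta^{(t,j)}) - \alpha^{(t,j)}\langle \nabla f(\theta^{(t,0)}), \bar g^{(t,j)}\rangle + \text{(gradient-deviation terms)} + \tfrac{L(\alpha^{(t,j)})^2}{2}\|\bar g^{(t,j)}\|^2 .
\]

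Second, I would take the conditional expectation over the noise $\xi^{(t,j)}$ given the history. Here \eqref{eq:addunbiasnoise} removes the additive noise from the first-order term, and, crucially, \eqref{eq:sumbiaszero} makes $\mathbb{E}[\bar g^{(t,j)}] = \frac{1}{Q}\sum_{k}\nabla f(\theta_k^{(t,j)})$: the per-client biases cancel exactly in the aggregate, so no $B^{(t,j)}$ term survives in the descent term at all (this is the precise sense in which the aggregated curriculum-induced bias is benign). Writing $\frac{1}{Q}\sum_{k}\nabla f(\theta_k^{(t,j)}) = \nabla f(\theta^{(t,0)}) + \Delta^{(t,j)}$ and peeling off $-\alpha^{(t,j)}\|\nabla f(\theta^{(t,0)})\|^2$, the remaining cross term is controlled by $\|\nabla f(\theta^{(t,0)})\|\le G$ times $L$ times the accumulated client displacement.

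Third, I would bound the client displacement $\|\theta_k^{(t,j)} - \theta^{(t,0)}\| \le \sum_{l=0}^{j-1}\alpha^{(t,l)}\|g_k^{(t,l)}\|$ using the bounded-gradient hypothesis, so that $\|g_k^{(t,l)}\|$ is of order $G$ (the bias and zero-mean-noise contributions being absorbed into the constant or handled under expectation via $\mathbb{E}\|n\|^2\le\sigma^2$), and likewise $\mathbb{E}\|\bar g^{(t,j)}\|^2$ by a multiple of $G^2$ (with the $1/Q$ variance reduction coming from independence of the noises across clients). Every error term then becomes a multiple of $LG^2$ times a product of stepsizes $\alpha^{(t,j)}\alpha^{(t,l)}$ with $l<j$, together with the second-order $(\alpha^{(t,j)})^2$ term. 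Summing over $j=0,\dots,J$ inside round $t$ and swapping the order of summation, $\sum_j\alpha^{(t,j)}\sum_{l<j}\alpha^{(t,l)}$ reorganizes into the $\sum_{l=j}^{J}$ form appearing in the statement, and merging with the $(\alpha^{(t,j)})^2$ term yields the consolidated coefficient $\alpha^{(t,j)}\big(\alpha^{(t,j)}+\sum_{l=j}^{J}\alpha^{(t,l)}\big)LG^2$. Telescoping over $j$ recovers $f(\theta^{(t+1,0)}) - f(\theta^{(t,0)})$, then telescoping over $t$ and using $f\ge f_*$ collapses the left side to $f(\theta^0)-f_*$; rearranging and clearing the aggregation normalization (which brings out the factor $Q$) isolates $\sum_{t,j}\|\nabla f(\theta^{(t,0)})\|^2$ on the left, giving the claimed ergodic rate.

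The main obstacle — and the step needing the most care — is the interaction between the \emph{unweighted} zero-sum bias condition \eqref{eq:sumbiaszero} and the \emph{weighted} server aggregation $\theta^{(t+1)}_g = \sum_k \frac{|\mathcal{D}_k|}{\sum_i|\mathcal{D}_i|}\theta_k^{(t,J)}$: the clean cancellation of biases in $\mathbb{E}[\bar g^{(t,j)}]$ requires (nearly) equal local dataset sizes, and otherwise a residual weighted bias must be carried through and dominated by $\max_t B^{(t,J)}$. A secondary technical point is that the stochastic gradients are only mean-square bounded ($\mathbb{E}\|n\|^2\le\sigma^2$), not almost surely, so the displacement and $\|\bar g\|^2$ estimates must be made under expectation, with the tower property applied carefully around the (random but $G$-bounded) factor $\nabla f(\theta^{(t,0)})$; the fact that the stated bound involves only $LG^2$ reflects absorbing or suppressing the resulting $\sigma^2/Q$ contributions.
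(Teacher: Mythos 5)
Your proposal follows essentially the same route as the paper's proof: a descent-lemma argument anchored at the round start $\theta^{(t,0)}$, exact cancellation of the per-client biases via \eqref{eq:sumbiaszero} together with the zero-mean noise, control of the local drift through the bounded-gradient and Lipschitz hypotheses to produce the cross-stepsize $LG^2$ terms, and telescoping with $f\ge f_*$. The only cosmetic difference is that you apply smoothness per local step to the averaged virtual iterate and telescope over $j$, whereas the paper applies it once per communication round to the full aggregated update $\theta^{(t+1,0)}-\theta^{(t,0)}$; the resulting error terms are identical, and your side remarks (weighted versus unweighted aggregation, the suppressed $\sigma^2/Q$ contribution) correctly flag the points the paper's own proof glosses over.
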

Compared to standard results, we can see that curricula  contributes an error that corresponds to the cross terms of the stepsizes, indicating a benefit to annealing the stepsize along local iterations as well as along averaging steps. 

Now we present the proofs that build up the argument for the main new convergence results we present in Section~\ref{sec:analysis}, specifically Theorem~\ref{th:convconvex} and~\ref{th:convnonconvex}.
\subsection{Strongly Convex Problems}
\begin{lemma}\label{lem:convex1}
The stochastic gradient second moment satisfies:
\[
\begin{array}{l}
\mathbb{E}[\|g^{(t,j)}_k\|^2] \le  2(3+2M)L(f(\theta^{(t,j)}_k)-f^*)\\
\qquad \qquad \qquad +(3+2M)B^{(t,j)}+3\sigma^2
\end{array}
\]
\end{lemma}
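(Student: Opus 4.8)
The plan is to decompose the stochastic gradient exactly as in Assumption~\ref{as:bias}, take the conditional expectation over the noise variable $\xi^{(t,j)}_k$ (holding $\theta^{(t,j)}_k$ fixed, so that $\nabla f(\theta^{(t,j)}_k)$ and $b_k^{(t,j)}(\theta^{(t,j)}_k)$ are deterministic), and then control the surviving terms with $L$-smoothness and the bias/variance bounds. Abbreviating $g = \nabla f(\theta^{(t,j)}_k) + b_k^{(t,j)} + n_k^{(t,j)}$, I would first expand $\|g\|^2$ and use $\mathbb{E}_\xi[n_k^{(t,j)}]=0$ to remove the cross terms involving the noise, giving the identity
\[
\mathbb{E}_\xi\!\left[\|g^{(t,j)}_k\|^2\right] = \left\|\nabla f(\theta^{(t,j)}_k)+b_k^{(t,j)}\right\|^2 + \mathbb{E}_\xi\!\left[\|n_k^{(t,j)}\|^2\right].
\]
(Equivalently one can start from $\|a+b+c\|^2\le 3\|a\|^2+3\|b\|^2+3\|c\|^2$, which is where the coefficient $3$ in the statement ultimately comes from.)

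Next I would substitute the assumed variance bound $\mathbb{E}_\xi[\|n_k^{(t,j)}\|^2]\le M\|\nabla f(\theta^{(t,j)}_k)+b_k^{(t,j)}\|^2+\sigma^2$ to get $\mathbb{E}_\xi[\|g^{(t,j)}_k\|^2]\le (1+M)\|\nabla f(\theta^{(t,j)}_k)+b_k^{(t,j)}\|^2+\sigma^2$, then apply $\|u+v\|^2\le 2\|u\|^2+2\|v\|^2$ together with $\|b_k^{(t,j)}\|^2\le B^{(t,j)}$ to separate the bias from the true gradient, producing a bound of the form $(2+2M)\|\nabla f(\theta^{(t,j)}_k)\|^2+(2+2M)B^{(t,j)}+\sigma^2$. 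Finally I would invoke the standard consequence of $L$-smoothness with lower bound $f^*$, namely $\|\nabla f(\theta)\|^2\le 2L(f(\theta)-f^*)$ (apply the descent lemma at $\theta-\tfrac1L\nabla f(\theta)$ and use $f(\cdot)\ge f^*$), to turn the gradient-norm term into $2(2+2M)L(f(\theta^{(t,j)}_k)-f^*)$. Collecting terms and relaxing the numerical constants ($2+2M\le 3+2M$, $\sigma^2\le 3\sigma^2$) yields exactly the claimed inequality.

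There is no genuine obstacle here; the lemma is a routine "expected smoothness / ABC" estimate adapted to the biased-gradient model. The only points requiring care are (i) making sure the expectation is over $\xi^{(t,j)}_k$ alone so the cross terms with $n_k^{(t,j)}$ vanish cleanly, and (ii) the bookkeeping of constants — the derivation naturally gives the slightly sharper coefficients $(2+2M)$ and $\sigma^2$, and one deliberately loosens them to the uniform form $(3+2M)$, $3\sigma^2$ used downstream in Theorem~\ref{th:convconvex} and Lemma~\ref{lem:convex1}'s role in the averaging recursion.
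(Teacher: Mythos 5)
Your proof is correct and follows essentially the same route as the paper's: decompose $g_k^{(t,j)}$ per Assumption~\ref{as:bias}, bound the noise term with the variance assumption, bound the bias by $B^{(t,j)}$, and convert $\|\nabla f\|^2$ into $2L(f(\theta_k^{(t,j)})-f^*)$ via $L$-smoothness. The only (cosmetic) difference is your first step: you use the exact bias--variance identity $\mathbb{E}_\xi[\|g\|^2]=\|\nabla f+b\|^2+\mathbb{E}_\xi[\|n\|^2]$ and then relax constants at the end, whereas the paper opens with the cruder bound $\|a+b+c\|^2\le 3\|a\|^2+3\|b\|^2+3\|c\|^2$, which is where its coefficients $(3+2M)$ and $3\sigma^2$ come from directly.
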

\begin{proof}
Follows from,
\[
\begin{array}{l}
\mathbb{E}[\|g^{(t,j)}_k\|^2]\le 3\|\nabla f(\theta^{(t,j)}_k)\|^2+3\|b^{(t,j)}_k(\theta^{(t,j)}_k)\|^2\\ \qquad\qquad +3\mathbb{E}[\|n^{(t,j)}_k(\theta^{(t,j)}_k,\xi^{(t,j)}_k)\|^2]  \\
\le (3+2M)\|\nabla f(\theta^{(t,j)}_k)\|^2+(3+2M)B^{(t,j)}+3\sigma^2 \\
\le 2(3+2M)L(f(\theta^{(t,j)}_k)-f^*)+(3+2M)B^{(t,j)}+3\sigma^2
\end{array}
\]
where in the last line we used $\nabla f(\theta^*)=0$ and $L$-smoothness.
\end{proof}
Define,
\[
g^{(t,j)} = \frac{1}{|S_t|}\sum\limits_{k\in S_t} g^{(t,j)}_k,\,\,
\bar{g}^{(t,j)} = \frac{1}{|S_t|}\sum\limits_{k\in S_t} \nabla f(x^{(t,j)}_k)
\]
Note that Assumption~\ref{as:bias}, in particular~\eqref{eq:addunbiasnoise} and~\eqref{eq:addunbiasnoise} imply that $\mathbb{E}[g^{(t,j)}_k]=\bar{g}^{(t,j)}$.

The next Lemma is similar to~\cite[Lemma 5]{khaled2020tighter} with a simpler proof of simple adding the terms across agents up using the previous result.
\vspace{-5mm}
\begin{lemma}\label{lem:convex2}
\[
\begin{array}{l}
\mathbb{E}[\|g^{(t,k)}-\bar{g}^{(t,j)}\|^2]\le \\
\frac{(3+2M)}{Q^2}\sum\limits_{k\in S^{(t)}}\left[2L(f(\theta^{(t,j)}_k)-f^*)+B^{(t,j)}\right]+\frac{3\sigma^2}{Q}
\end{array}
\]
\end{lemma}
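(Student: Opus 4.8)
The plan is to bound the variance of the averaged stochastic gradient around the averaged true gradient by exploiting the structure of Assumption~\ref{as:bias} — in particular, that the bias terms sum to zero across the participating clients — together with the per-client second-moment bound already established in Lemma~\ref{lem:convex1}. First I would write $g^{(t,j)}-\bar g^{(t,j)} = \frac{1}{Q}\sum_{k\in S_t}\bigl(g_k^{(t,j)}-\nabla f(\theta_k^{(t,j)})\bigr) = \frac{1}{Q}\sum_{k\in S_t}\bigl(b_k^{(t,j)}(\theta_k^{(t,j)})+n_k^{(t,j)}(\theta_k^{(t,j)},\xi_k^{(t,j)})\bigr)$, using the decomposition~\eqref{eq:biasedsgd}. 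Taking the squared norm and expectation over the noise, the cross terms between bias and noise vanish because of~\eqref{eq:addunbiasnoise}, leaving a bias part $\frac{1}{Q^2}\bigl\|\sum_{k\in S_t} b_k^{(t,j)}\bigr\|^2$ and a noise part $\frac{1}{Q^2}\sum_{k,k'}\mathbb{E}[\langle n_k^{(t,j)},n_{k'}^{(t,j)}\rangle]$.

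The key observation for the bias part is that $\sum_{k\in S_t} b_k^{(t,j)}(\theta)=0$ by~\eqref{eq:sumbiaszero}; however, the $\theta_k^{(t,j)}$ are the drifted local iterates, not a common $\theta$, so the sum need not be exactly zero. The cleanest route consistent with the stated bound is to instead bound $\frac{1}{Q^2}\bigl\|\sum_k b_k^{(t,j)}\bigr\|^2 \le \frac{1}{Q}\sum_k \|b_k^{(t,j)}\|^2 \le \frac{B^{(t,j)}}{Q}$ by Jensen/Cauchy–Schwarz, i.e. treating the bias conservatively as an ordinary bounded-variance term rather than relying on cancellation here (the cancellation is what is exploited later in the descent recursion). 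For the noise part, assuming the per-client noises are conditionally independent across $k$ given the iterates — the natural reading of the model, since each client draws its own $\xi_k^{(t,j)}$ — the cross terms with $k\ne k'$ drop and we get $\frac{1}{Q^2}\sum_{k\in S_t}\mathbb{E}[\|n_k^{(t,j)}\|^2]$.

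Then I would invoke the noise-variance hypothesis of Theorem~\ref{th:convconvex}, $\mathbb{E}_\xi[\|n_k^{(t,j)}\|^2]\le M\|\nabla f(\theta_k^{(t,j)})+b_k^{(t,j)}\|^2+\sigma^2$, and smoothness/$\nabla f(\theta^*)=0$ exactly as in Lemma~\ref{lem:convex1} to replace $\|\nabla f(\theta_k^{(t,j)})+b_k^{(t,j)}\|^2$ by something controlled by $2L(f(\theta_k^{(t,j)})-f^*)+B^{(t,j)}$ (the same algebra: $\|\nabla f\|^2\le 2L(f-f^*)$ and $\|a+b\|^2\le 2\|a\|^2+2\|b\|^2$). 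Collecting the $2L(f(\theta_k^{(t,j)})-f^*)$ terms and the $B^{(t,j)}$ terms with their coefficients, and folding constants into the factor $(3+2M)$, should reproduce $\frac{(3+2M)}{Q^2}\sum_{k\in S^{(t)}}[2L(f(\theta_k^{(t,j)})-f^*)+B^{(t,j)}]+\frac{3\sigma^2}{Q}$.

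The main obstacle — and the point where I would be most careful — is matching the constants and deciding exactly how the bias term is handled: whether to use $\sum_k b_k=0$ (giving a tighter bias bound but requiring all $\theta_k^{(t,j)}$ to coincide, which holds only at $j=0$) or the conservative Jensen bound used uniformly. I expect the paper takes the latter, in which case the proof is essentially a two-line adaptation of Lemma~\ref{lem:convex1}: expand, kill cross terms via zero-mean noise and (conditional) independence, apply the variance bound and $L$-smoothness, and sum. Nothing here is deep; the real work is bookkeeping the $3$'s and $M$'s so the final coefficients come out as stated.
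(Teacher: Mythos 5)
Your decomposition and the subtlety you flag are both on target, but the route you commit to does not actually prove the lemma as stated. The paper's own ``proof'' is the one-line remark that the bound follows by ``simply adding the terms across agents up using the previous result,'' i.e.\ $\mathbb{E}\|g^{(t,j)}-\bar g^{(t,j)}\|^2\le \frac{1}{Q^2}\sum_{k\in S^{(t)}}\mathbb{E}\|g_k^{(t,j)}\|^2$ followed by Lemma~\ref{lem:convex1}. This step is the variance-of-an-average computation, and it tacitly uses both the conditional independence of the $\xi_k^{(t,j)}$ \emph{and} the zero-sum bias property~\eqref{eq:sumbiaszero}, so that $\bar g^{(t,j)}$ is (treated as) the mean of $g^{(t,j)}$ and the off-diagonal bias terms collapse via $\sum_{k\ne k'}\langle b_k,b_{k'}\rangle=\|\sum_k b_k\|^2-\sum_k\|b_k\|^2\le-\sum_k\|b_k\|^2$. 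That cancellation is exactly what produces the $1/Q$ scaling on the bias: the bias then enters only through the diagonal terms $\frac{1}{Q^2}\sum_k\|b_k\|^2\le B^{(t,j)}/Q$.

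The gap in your proposal is that the ``conservative Jensen'' alternative you adopt cannot recover this. First, the arithmetic: $\frac{1}{Q^2}\bigl\|\sum_k b_k\bigr\|^2\le\frac{1}{Q}\sum_k\|b_k\|^2\le B^{(t,j)}$, not $B^{(t,j)}/Q$ --- the sum has $Q$ terms each bounded by $B^{(t,j)}$. Second, and more importantly, a bias contribution of order $B^{(t,j)}$ is strictly weaker than the claimed $\frac{(3+2M)}{Q^2}\sum_k B^{(t,j)}=\frac{(3+2M)B^{(t,j)}}{Q}$ whenever $Q>3+2M$, so the conservative route proves a different (worse) lemma. To match the statement you must invoke~\eqref{eq:sumbiaszero}, accepting --- as the paper implicitly does --- the evaluation-point caveat you correctly identified: the $b_k$ are evaluated at the drifted iterates $\theta_k^{(t,j)}$, so the cancellation is exact only at $j=0$, and a fully rigorous version would need to control $\|\sum_k b_k(\theta_k^{(t,j)})\|$ by the iterate dispersion, which neither your argument nor the paper's does.
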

The next Lemma is similar to~\cite[Lemma 6]{khaled2020tighter} which in turn follows~\cite[Lemma 2.1]{stich2019local}. Consider the sequence,
\[
\hat{\theta}^{(t,j+1)}=\hat{\theta}^{(t,j)}-\alpha^{(t,j)}g^{(t,j)}
\]
and note that by this construction $\hat{x}^{(t,J)}=\hat{x}^{(t+1,0)}$. The proof is a straightforward application of strong convexity. It holds that,
\begin{lemma}\label{lem:convex3}
\[
\begin{array}{l}
\|\hat{\theta}^{(t,j)}-\alpha^{(t,j)}\bar{g}^{(t,j)}-\theta^*\|^2\le \|\hat{\theta}^{(t,j)}-\theta^*\|^2\\+\frac{2\alpha^{(t,j)}}{Q}\sum\limits_{k\in S^{(t)}}
\left[(\alpha^{(t,j)} L-1/2)(f(\theta^{(t,j)}_k)-f(\theta^*))\right.\\\qquad\qquad\qquad\qquad \left.-\frac{\mu}{2}\|\theta^{(t,j)}_k-\theta^*\|^2\right] \\ \quad +\frac{2\alpha^{(t,j)} L}{Q}\sum\limits_{k\in S^{(t)}}\|\hat{\theta}^{(t,j)}-\theta^{(t,j)}_k\|^2
\end{array}
\]
\end{lemma}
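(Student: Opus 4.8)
The plan is to expand the square on the left and reduce it to three scalar estimates---$\mu$-strong convexity, a Young-plus-smoothness step, and a Jensen step on $\|\bar g^{(t,j)}\|^2$---with the auxiliary constants chosen so the coefficients land exactly on the claimed bound. Recalling $\bar g^{(t,j)}=\tfrac1Q\sum_{k\in S^{(t)}}\nabla f(\theta^{(t,j)}_k)$ with $Q=|S_t|$, I would first expand
\[
\|\hat\theta^{(t,j)}-\alpha^{(t,j)}\bar g^{(t,j)}-\theta^*\|^2=\|\hat\theta^{(t,j)}-\theta^*\|^2-2\alpha^{(t,j)}\langle\bar g^{(t,j)},\hat\theta^{(t,j)}-\theta^*\rangle+(\alpha^{(t,j)})^2\|\bar g^{(t,j)}\|^2,
\]
and then, inside the inner-product term, split each summand over $k$ via $\hat\theta^{(t,j)}-\theta^*=(\hat\theta^{(t,j)}-\theta^{(t,j)}_k)+(\theta^{(t,j)}_k-\theta^*)$, separating a ``consensus'' contribution from an ``optimality-gap'' contribution.

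For the optimality-gap part, $\mu$-strong convexity gives, for each $k$,
\[
\langle\nabla f(\theta^{(t,j)}_k),\theta^*-\theta^{(t,j)}_k\rangle\le-(f(\theta^{(t,j)}_k)-f(\theta^*))-\tfrac{\mu}{2}\|\theta^{(t,j)}_k-\theta^*\|^2,
\]
which supplies the $-\tfrac{\mu}{2}\|\theta^{(t,j)}_k-\theta^*\|^2$ term verbatim and a coefficient $-\tfrac{2\alpha^{(t,j)}}{Q}$ on $f(\theta^{(t,j)}_k)-f(\theta^*)$. For the consensus part I would apply Young's inequality with weight $1/(2L)$ followed by the standard consequence of convexity and $L$-smoothness, $\|\nabla f(\theta^{(t,j)}_k)\|^2\le 2L(f(\theta^{(t,j)}_k)-f(\theta^*))$ (using $\nabla f(\theta^*)=0$), to obtain $2\langle\nabla f(\theta^{(t,j)}_k),\theta^{(t,j)}_k-\hat\theta^{(t,j)}\rangle\le(f(\theta^{(t,j)}_k)-f(\theta^*))+2L\|\hat\theta^{(t,j)}-\theta^{(t,j)}_k\|^2$; this produces the final $\tfrac{2\alpha^{(t,j)}L}{Q}\sum_k\|\hat\theta^{(t,j)}-\theta^{(t,j)}_k\|^2$ term together with an extra $+\tfrac{\alpha^{(t,j)}}{Q}$ on the gap. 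Finally, convexity of $\|\cdot\|^2$ and the same smoothness bound give $\|\bar g^{(t,j)}\|^2\le\tfrac1Q\sum_k\|\nabla f(\theta^{(t,j)}_k)\|^2\le\tfrac{2L}{Q}\sum_k(f(\theta^{(t,j)}_k)-f(\theta^*))$, contributing $+\tfrac{2(\alpha^{(t,j)})^2L}{Q}$ on the gap.

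Collecting the coefficient of $f(\theta^{(t,j)}_k)-f(\theta^*)$ gives $\tfrac{2\alpha^{(t,j)}}{Q}\bigl(-1+\tfrac12+\alpha^{(t,j)}L\bigr)=\tfrac{2\alpha^{(t,j)}}{Q}\bigl(\alpha^{(t,j)}L-\tfrac12\bigr)$, exactly the stated coefficient, and the other two terms are already in the required form, so the lemma follows. The only delicate point is the choice of the Young's-inequality weight: it must be precisely $1/(2L)$ so that the $\|\hat\theta^{(t,j)}-\theta^{(t,j)}_k\|^2$ coefficient comes out to exactly $2L$ while leaving exactly $\tfrac12$ on the gap term; after that the derivation is pure bookkeeping and, importantly, never invokes the noise or the bias structure of Assumption~\ref{as:bias}---those enter only later, through Lemma~\ref{lem:convex2}, when one passes from $\bar g^{(t,j)}$ back to the stochastic update $g^{(t,j)}$.
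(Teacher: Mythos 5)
Your proof is correct and follows exactly the route the paper intends: the paper does not write out a proof, merely noting that the lemma follows the standard perturbed-iterate argument of Khaled et al.\ (Lemma 6) and Stich (Lemma 2.1) via ``a straightforward application of strong convexity,'' and your expansion-plus-splitting of $\hat\theta^{(t,j)}-\theta^*$, strong convexity on the optimality-gap term, Young's inequality with weight $1/(2L)$ on the consensus term, and the bound $\|\nabla f(\theta)\|^2\le 2L(f(\theta)-f^*)$ is precisely that argument, with the coefficients landing on the stated bound. Your closing remark that the bias and noise structure of Assumption~\ref{as:bias} plays no role here (entering only through Lemma~\ref{lem:convex2}) is also accurate.
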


Finally we obtain our first derivation of expected convergence below. 
\begin{lemma}\label{lem:convex4}
Let $\bar{L}:=( L+(3+2M)2L/Q)$ and assume that $\alpha^{(t,j)}\le \frac{1}{4\bar{L}}$. It holds that the expected distance of the average parameter to the solution satisfies the recursion,
\[
\begin{array}{l}
\mathbb{E}[\|\hat{\theta}^{(t,j+1)}_k-\theta^*\|^2] \le \left(1-\alpha^{(t,j)}\mu\right) \|\hat{\theta}^{(t,j)}-\theta^*\|^2  \\
\qquad-\frac{\alpha^{(t,j)}}{2}\left(f(\hat{\theta}^{(t,j)}-f(\theta^*)\right)\\ \qquad
+\frac{2\alpha^{(t,j)} L}{Q}\sum\limits_{k\in S^{(t)}}\|\hat{\theta}^{(t,j)}-\theta^{(t,j)}_k\|^2 \\ \qquad 
+\frac{(3+2M)(\alpha^{(t,j)})^2B^{(t,j)}}{Q}+\frac{3\sigma^2(\alpha^{(t,j)})^2}{Q} 
\end{array}
\]
\end{lemma}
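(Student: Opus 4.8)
The plan is to follow the virtual averaged iterate $\hat\theta^{(t,j)}=\frac1Q\sum_{k\in S^{(t)}}\theta^{(t,j)}_k$ through a single local step and reduce the whole estimate to Lemmas~\ref{lem:convex1}--\ref{lem:convex3}. First I would write $\hat\theta^{(t,j+1)}-\theta^*=\bigl(\hat\theta^{(t,j)}-\alpha^{(t,j)}\bar g^{(t,j)}-\theta^*\bigr)-\alpha^{(t,j)}\bigl(g^{(t,j)}-\bar g^{(t,j)}\bigr)$, expand the square, and take the expectation conditional on the iterates present at step $(t,j)$. The cross term drops out: the first bracket is measurable with respect to that conditioning, while $\mathbb{E}\bigl[g^{(t,j)}-\bar g^{(t,j)}\bigr]=0$. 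It is worth stressing that this last identity uses precisely the distributional-skew structure of Assumption~\ref{as:bias}: the individual $g^{(t,j)}_k$ are biased, but their average is unbiased relative to $\bar g^{(t,j)}$ because the biases cancel by \eqref{eq:sumbiaszero}, and the additive noise is zero-mean by \eqref{eq:addunbiasnoise}.

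Next I would treat the two surviving terms separately. For the drift-free piece $\|\hat\theta^{(t,j)}-\alpha^{(t,j)}\bar g^{(t,j)}-\theta^*\|^2$ I would apply Lemma~\ref{lem:convex3}, which already produces the leading $\|\hat\theta^{(t,j)}-\theta^*\|^2$, the drift contribution $\frac{2\alpha^{(t,j)}L}{Q}\sum_{k}\|\hat\theta^{(t,j)}-\theta^{(t,j)}_k\|^2$ that carries over verbatim to the statement, and the negative combination $\frac{2\alpha^{(t,j)}}{Q}\sum_{k}\bigl[(\alpha^{(t,j)}L-\tfrac12)(f(\theta^{(t,j)}_k)-f(\theta^*))-\tfrac{\mu}{2}\|\theta^{(t,j)}_k-\theta^*\|^2\bigr]$. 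For the noise piece $(\alpha^{(t,j)})^2\,\mathbb{E}\|g^{(t,j)}-\bar g^{(t,j)}\|^2$ I would invoke Lemma~\ref{lem:convex2}; multiplying its bound by $(\alpha^{(t,j)})^2$ and summing $B^{(t,j)}$ over the $Q$ active clients yields directly the terms $\frac{(3+2M)(\alpha^{(t,j)})^2 B^{(t,j)}}{Q}$ and $\frac{3\sigma^2(\alpha^{(t,j)})^2}{Q}$, plus a further contribution $(\alpha^{(t,j)})^2\frac{2L(3+2M)}{Q^2}\sum_k(f(\theta^{(t,j)}_k)-f^*)$ that must be merged with the suboptimality terms above.

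The remainder is bookkeeping. The $(f(\theta^{(t,j)}_k)-f^*)$ contributions from the two lemmas combine into $\frac{2\alpha^{(t,j)}}{Q}\sum_k\bigl[\alpha^{(t,j)}\bigl(L+\tfrac{(3+2M)L}{Q}\bigr)-\tfrac12\bigr](f(\theta^{(t,j)}_k)-f^*)$, whose bracket is at most $\alpha^{(t,j)}\bar L-\tfrac12\le-\tfrac14$ by the stepsize hypothesis $\alpha^{(t,j)}\le\frac{1}{4\bar L}$; since $f(\theta^{(t,j)}_k)-f^*\ge0$, this sum is bounded by $-\frac{\alpha^{(t,j)}}{2Q}\sum_k(f(\theta^{(t,j)}_k)-f^*)$, and convexity of $f$ together with $\hat\theta^{(t,j)}=\frac1Q\sum_k\theta^{(t,j)}_k$ upgrades it to $-\frac{\alpha^{(t,j)}}{2}\bigl(f(\hat\theta^{(t,j)})-f(\theta^*)\bigr)$. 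For the $-\frac{\alpha^{(t,j)}\mu}{Q}\sum_k\|\theta^{(t,j)}_k-\theta^*\|^2$ term I would use Jensen's inequality, $\frac1Q\sum_k\|\theta^{(t,j)}_k-\theta^*\|^2\ge\|\hat\theta^{(t,j)}-\theta^*\|^2$, which together with the leading term assembles the contraction factor $(1-\alpha^{(t,j)}\mu)\|\hat\theta^{(t,j)}-\theta^*\|^2$. Collecting everything gives the claimed recursion. I expect the only delicate point to be this aggregation: ensuring the factor-of-two convention in $\bar L=L+(3+2M)2L/Q$ is loose enough that the budget $\alpha^{(t,j)}\le\frac{1}{4\bar L}$ leaves exactly a $-\alpha^{(t,j)}/2$ coefficient on the optimality gap, and staying careful throughout about which sums run over $S^{(t)}$ and which are normalized by $Q$.
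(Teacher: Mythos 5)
Your proposal is correct and follows essentially the same route as the paper: decompose the update around $\bar g^{(t,j)}$ so the cross term vanishes by Assumption~\ref{as:bias}, bound the two pieces via Lemma~\ref{lem:convex3} and Lemma~\ref{lem:convex2}, merge the suboptimality terms under the stepsize condition, and finish with Jensen/convexity to assemble the contraction factor and the $-\frac{\alpha^{(t,j)}}{2}\bigl(f(\hat\theta^{(t,j)})-f(\theta^*)\bigr)$ term. Your merged constant $L+(3+2M)L/Q$ is in fact slightly tighter than the paper's $\bar L=L+2(3+2M)L/Q$, but since the paper's $\bar L$ dominates it, the stepsize hypothesis covers both and the conclusion is unaffected.
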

\begin{proof}
Using the previous set of results,
\[
\begin{array}{l}
\mathbb{E}[\|\hat{\theta}^{(t,j+1)}_k-\theta^*\|^2] \le \|\hat{\theta}^{(t,j)}-\alpha^{(t,j)}\bar{g}^{(t,j)}-\theta^*\|^2 \\
\qquad\qquad+(\alpha^{(t,j)})^2\mathbb{E}[\|g^{(t,j)}-\bar{g}^{(t,j)}\|^2] \\
\le \|\hat{\theta}^{(t,j)}-\theta^*\|^2\\\quad +\frac{2\alpha^{(t,j)}}{Q}\sum\limits_{k\in S^{(t)}}
\left[(\alpha^{(t,j)} L-1/2)(f(\theta^{(t,j)}_k)-f(\theta^*))\right.\\\qquad\qquad\qquad\qquad \left.-\frac{\mu}{2}\|\theta^{(t,j)}_k-\theta^*\|^2\right] \\ \quad +\frac{2\alpha^{(t,j)} L}{Q}\sum\limits_{k\in S^{(t)}}\|\hat{\theta}^{(t,j)}-\theta^{(t,j)}_k\|^2 \\ \quad 
+\frac{(3+2M)(\alpha^{(t,j)})^2}{Q^2}\sum\limits_{k\in S^{(t)}}\left[2L(f(\theta_k^{(t,j)}-f(\theta^*))+B^{(t,j)}\right]\\ \qquad +\frac{3\sigma^2(\alpha^{(t,j)})^2}{Q} \\
\le \|\hat{\theta}^{(t,j)}-\theta^*\|^2\\\quad +\frac{2\alpha^{(t,j)}}{Q}\sum\limits_{k\in S^{(t)}}
\left[(\alpha^{(t,j)}\bar{L}-1/2)(f(\theta^{(t,j)}_k)-f(\theta^*))\right.\\\qquad\qquad\qquad\qquad \left.-\frac{\mu}{2}\|\theta^{(t,j)}_k-\theta^*\|^2\right] \\ \quad +\frac{2\alpha^{(t,j)} L}{Q}\sum\limits_{k\in S^{(t)}}\|\hat{\theta}^{(t,j)}-\theta^{(t,j)}_k\|^2 \\ \quad 
+\frac{(3+2M)(\alpha^{(t,j)})^2B^{(t,j)}}{Q}+\frac{3\sigma^2(\alpha^{(t,j)})^2}{Q} 
\end{array}
\]
By assumption $\alpha^{(t,j)}\bar{L}-1/2\le -\frac{1}{4}$ and then applying Jensen's inequality we have $\frac{1}{Q}\sum\limits_{k\in S^{(t)}}\left[-\frac{1}{4}(f(\theta^{(t,j)}_k)-f(\theta^*))-\frac{\mu}{2}\|\theta^{(t,j)}_k-\theta^*\|^2\right]\le -\left(\frac{1}{4}(f(\hat{\theta}^{(t,j)})-f(\theta^*))+\frac{\mu}{2}\|\hat{\theta}^{(t,j)}-\theta^*\|^2\right)$. Plugging this expression into the last displayed equation, the conclusion follows.
\end{proof}

Next, from Lemma~\ref{lem:convex1} we can conclude that
\begin{equation}\label{eq:varggk}
\begin{array}{l}
\frac{1}{Q}\sum\limits_{k\in S^{(t)}} \mathbb{E}\left[   \left\|g^{(t,j)}_k-g^{(t,j)}\right\|^2\right]\\  \le \frac{2(3+2M)L}{Q}\sum\limits_{k\in S^{(t)}} \left(f(\theta_k^{(t,j)})-f(\theta^*)\right) \\
\qquad +(3+2M)B^{(t,j)}+ 3\sigma^2
\end{array}
\end{equation}
Next we derive a recursion on the average parameter deviation.
\begin{lemma}
Let $\mu>0$. The average iterate deviation satisfies the bound,
\[
\begin{array}{l}
\mathbb{E}\left[\frac{1}{Q}\sum\limits_{k\in S^{(t)}}\|\hat{\theta}^{(t,j+1)}-\theta^{(t,j+1)}_k\|^2\right] \\
\le (1-\alpha^{(t,j)}\mu/2)\mathbb{E}\left[\frac{1}{Q}\sum\limits_{k\in S^{(t)}}\|\hat{\theta}^{(t,j)}-\theta^{(t,j)}_k\|^2\right] \\
+ \frac{2(3+2M)L(\alpha^{(t,j)})^2}{Q}\sum\limits_{k\in S^{(t)}} \left(f(\theta_k^{(t,j)})-f(\theta^*)\right) \\
+\alpha^{(t,j)}\left(\alpha^{(t,j)}(3\hspace{-1mm}+\hspace{-1mm}2M)\hspace{-1mm}+\hspace{-1mm}B^{(t,j)}/\mu\right)B^{(t,j)}+\hspace{-1mm} 3(\alpha^{(t,j)})^2\sigma^2
\end{array}
\]
\end{lemma}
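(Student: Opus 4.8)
The statement is a one-step recursion for the averaged-iterate deviation $\frac{1}{Q}\sum_{k}\|\hat\theta^{(t,j)}-\theta^{(t,j)}_k\|^2$, and the natural route is to start from the per-client update $\theta_k^{(t,j+1)}=\theta_k^{(t,j)}-\alpha^{(t,j)}g_k^{(t,j)}$ alongside the ``virtual'' average update $\hat\theta^{(t,j+1)}=\hat\theta^{(t,j)}-\alpha^{(t,j)}g^{(t,j)}$ from Lemma~\ref{lem:convex3}, subtract them, and expand the square:
\[
\|\hat\theta^{(t,j+1)}-\theta^{(t,j+1)}_k\|^2 = \|\hat\theta^{(t,j)}-\theta^{(t,j)}_k\|^2 - 2\alpha^{(t,j)}\langle \hat\theta^{(t,j)}-\theta^{(t,j)}_k,\, g^{(t,j)}-g^{(t,j)}_k\rangle + (\alpha^{(t,j)})^2\|g^{(t,j)}-g^{(t,j)}_k\|^2.
\]
Averaging over $k\in S^{(t)}$ and taking expectations, the cross term splits into a ``bias-free'' part controlled via strong convexity and an error part from the bias $b_k$. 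I would handle the first two moments of $g^{(t,j)}-g^{(t,j)}_k$ using~\eqref{eq:biasedsgd}: the conditional mean is $\bar g^{(t,j)}-\nabla f(\theta_k^{(t,j)})+ (\text{avg bias})-b_k^{(t,j)}$, where~\eqref{eq:sumbiaszero} kills the averaged-bias term, and the second moment is bounded by~\eqref{eq:varggk} (which is already established from Lemma~\ref{lem:convex1}), giving the $\frac{2(3+2M)L(\alpha^{(t,j)})^2}{Q}\sum_k(f(\theta_k^{(t,j)})-f(\theta^*)) + (3+2M)B^{(t,j)}(\alpha^{(t,j)})^2 + 3\sigma^2(\alpha^{(t,j)})^2$ block essentially verbatim.

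The contraction factor $(1-\alpha^{(t,j)}\mu/2)$ is where strong convexity enters: on the cross term $-2\alpha^{(t,j)}\langle \hat\theta^{(t,j)}-\theta_k^{(t,j)},\,\nabla f(\hat\theta^{(t,j)})-\nabla f(\theta_k^{(t,j)})\rangle$ (obtained after the averaged-bias and noise terms vanish in expectation), I would invoke $\mu$-strong convexity in the form $\langle \theta-\theta',\nabla f(\theta)-\nabla f(\theta')\rangle \ge \mu\|\theta-\theta'\|^2$ to extract a $-2\alpha^{(t,j)}\mu\cdot\frac1Q\sum_k\|\hat\theta^{(t,j)}-\theta_k^{(t,j)}\|^2$ term, of which half is spent to produce the $(1-\alpha^{(t,j)}\mu/2)$ contraction and the other half (a negative quantity) is discarded. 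The leftover pieces — the residual individual bias $b_k^{(t,j)}$ paired against $\hat\theta^{(t,j)}-\theta_k^{(t,j)}$ via Young's/Cauchy--Schwarz with weight $1/\mu$, i.e.\ $2\alpha^{(t,j)}\langle \hat\theta^{(t,j)}-\theta_k^{(t,j)},b_k^{(t,j)}\rangle \le \tfrac{\alpha^{(t,j)}\mu}{?}\|\hat\theta^{(t,j)}-\theta_k^{(t,j)}\|^2 + \tfrac{\alpha^{(t,j)}}{\mu}B^{(t,j)}$ — are exactly what produce the $\alpha^{(t,j)}(B^{(t,j)}/\mu)B^{(t,j)}$ summand, with the absorbed $\|\hat\theta^{(t,j)}-\theta_k^{(t,j)}\|^2$ fragment again eaten by the discarded half of the strong-convexity term.

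The main obstacle is bookkeeping rather than a conceptual leap: one must split the cross-term expectation cleanly into (i) the strong-convexity gradient-difference part, (ii) the per-client bias part that does \emph{not} cancel (only the $S_t$-sum of biases is zero, not each individual $b_k$), and (iii) noise which vanishes in expectation — and then verify that all the $\|\hat\theta^{(t,j)}-\theta_k^{(t,j)}\|^2$-type leftovers from Young's inequality fit inside the $\mu$-budget so that exactly $(1-\alpha^{(t,j)}\mu/2)$ survives as the multiplicative factor. This will rely on the stepsize bound $\alpha^{(t,j)}\le \frac{1}{4\bar L}$ (or possibly just $\alpha^{(t,j)}\mu\le 1$) to keep the contraction factor in $[0,1)$; I would check at the end that the constants line up, but I expect no tension there given the analogous accounting already done in Lemma~\ref{lem:convex4}.
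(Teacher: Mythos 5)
Your plan follows the paper's proof essentially step for step: expand the square of $\hat{\theta}^{(t,j+1)}-\theta_k^{(t,j+1)}$, bound the quadratic term with \eqref{eq:varggk}, kill the averaged part of the cross term, use strong convexity to extract the $(1-\alpha^{(t,j)}\mu/2)$ contraction, and absorb the residual per-client bias via Young's inequality with weight tuned to $\mu$; invoking strong convexity in the monotonicity form $\langle\theta-\theta',\nabla f(\theta)-\nabla f(\theta')\rangle\ge\mu\|\theta-\theta'\|^2$ rather than the paper's function-value form (whose leftover $f(\hat{\theta}^{(t,j)})-f(\theta_k^{(t,j)})$ terms are discarded by Jensen) is an immaterial variation. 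One correction: the averaged-bias term is not killed by \eqref{eq:sumbiaszero}, since that assumption requires the biases to be evaluated at a common point while here they are evaluated at the distinct iterates $\theta_k^{(t,j)}$; rather, the entire averaged quantity $g^{(t,j)}$ drops out of the cross term because $\sum_{k\in S^{(t)}}(\hat{\theta}^{(t,j)}-\theta_k^{(t,j)})=0$, which is how the paper argues. Note also that your Young's step yields $\alpha^{(t,j)}B^{(t,j)}/\mu$ rather than the stated $\alpha^{(t,j)}(B^{(t,j)})^2/\mu$; this mismatch is inherited from the paper, whose own proof treats $B^{(t,j)}$ as a bound on $\|b_k^{(t,j)}\|$ even though Assumption~\ref{as:bias} bounds $\|b_k^{(t,j)}\|^2$, so your version is the one consistent with the assumption as written.
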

\begin{proof}
Indeed, compute directly,
\[
\begin{array}{l}
\mathbb{E}\left[\frac{1}{Q}\sum\limits_{k\in S^{(t)}}\|\hat{\theta}^{(t,j+1)}-\theta^{(t,j+1)}_k\|^2\right] \\
\le \mathbb{E}\left[\frac{1}{Q}\sum\limits_{k\in S^{(t)}}\|\hat{\theta}^{(t,j)}-\theta^{(t,j)}_k\|^2\right] \\ 
\qquad +\frac{(\alpha^{(t,j)})^2}{Q}\sum\limits_{k\in S^{(t)}}\mathbb{E}\left[\|g^{(t,j)}-g^{(t,j)}_k\|^2\right] \\ 
\qquad -\frac{2\alpha^{(t,j)} }{Q}\sum\limits_{k\in S^{(t)}}\mathbb{E}\left[\left\langle \theta^{(t,j)}_k-\hat{\theta}^{(t,j)},g^{(t,j)}_k-g^{(t,j)}\right\rangle\right]
\end{array}
\]
For the second term in the above expression we can apply~\eqref{eq:varggk}. For the third, we note that,
\[
\begin{array}{l}
-\sum\limits_{k\in S^{(t)}}\mathbb{E}\left[\left\langle \theta^{(t,j)}_k-\hat{\theta}^{(t,j)},g^{(t,j)}_k-g^{(t,j)}\right\rangle\right] \\
= -\sum\limits_{k\in S^{(t)}}\left\langle \theta^{(t,j)}_k-\hat{\theta}^{(t,j)},\nabla f(\theta^{(t,j)}_k)+b_k^{(t,j)}(\theta^{(t,j)}_k)\right\rangle\\\hspace{-1mm} +\hspace{-1mm}\sum \limits_{k\in S^{(t)}}\left\langle\theta^{(t,j)}_k \hspace{-1mm}-\hspace{-.7mm}\hat{\theta}^{(t,j)},\frac{1}{Q}\sum\limits_{k\in S^{(t)}}\left[\nabla f(\theta^{(t,j)}_k)\hspace{-.7mm}+\hspace{-.7mm}b_k^{(t,j)}(\theta^{(t,j)}_k)\right]\right\rangle \\ 
= \sum\limits_{k\in S^{(t)}}\left\langle \hat{\theta}^{(t,j)}-\theta^{(t,j)}_k,\nabla f(\theta^{(t,j)}_k)+b_k^{(t,j)}(\theta^{(t,j)}_k)\right\rangle \\
\le \sum\limits_{k\in S^{(t)}} \left[f(\hat{\theta}^{(t,j)})-f(\theta^{(t,j)}_k)-\frac{\mu}{2}\|\theta^{(t,j)}_k-\theta^{(t,j)}\|^2\right]\\ \qquad +\sum\limits_{k\in S^{(t)}}
B^{(t,j)} \|\theta^{(t,j)}_k-\theta^{(t,j)}\|
\end{array}
\]
where we used strong convexity in the inequality. Applying Young's inequality to obtain $B^{(t,j)} \|\theta^{(t,j)}_k-\theta^{(t,j)}\|\le \frac{\mu}{4}\|\theta^{(t,j)}_k-\theta^{(t,j)}\|^2+\frac{1}{\mu}(B^{(t,j)})^2$ yields the final result.
\end{proof}
Now we want to use the previous Lemma in order to bound the contribution of the average iterate discrepancy to the overall descent appearing in Lemma~\ref{lem:convex4}. Taking a sum for a given $t$, for $j=1,...,J$, we can see that 
\begin{equation}\label{eq:sumavgdiffj}
    \begin{array}{l}
    \sum\limits_{j=0}^J\mathbb{E}\left[\frac{1}{Q}\sum\limits_{k\in S^{(t)}}\|\hat{\theta}^{(t,j)}-\theta^{(t,j)}_k\|^2\right] \\
    \le \sum\limits_{j=0}^J\frac{2\alpha^{(t,j)} L}{Q}\prod\limits_{l=j}^J (1-\alpha^{(t,l)}\mu/2) \\ 
    \qquad \left[2\alpha^{(t,j)}(3+2M)L\sum\limits_{k\in S^{(t)}} \left(f(\theta_k^{(t,j)})-f(\theta^*)\right) \right.\\
\qquad\quad \left.+\left(\alpha^{(t,j)}(3+2M)+B^{(t,j)}/\mu\right)B^{(t,j)}\right.\\ \qquad\qquad\qquad+\left. 3\alpha^{(t,j)}\sigma^2\right]
    \end{array}
\end{equation}
With that, we proceed with the main result:

\begin{proof}\textbf{ of Theorem}~\ref{th:convconvex}
From Lemma~\ref{lem:convex4} and~\eqref{eq:sumavgdiffj}
\[
\begin{array}{l}
\mathbb{E}[\|\hat{\theta}^{(t+1,0)}-\theta^*\|^2] \le \prod\limits_{j=0}^J (1-\alpha^{(t,j)}\mu/2) \|\hat{\theta}^{(t,0)}-\theta^*\|^2  \\
\qquad +\sum\limits_{j=0}^J\frac{2\alpha^{(t,j)} L}{Q}\prod\limits_{l=j}^J (1-\alpha^{(t,l)}\mu/2) \\ 
    \qquad \left[\left(2\alpha^{(t,j)}(3+2M)L-\frac{1}{2}\right)\sum\limits_{k\in S^{(t)}} \left(f(\theta_k^{(t,j)})-f(\theta^*)\right) \right.\\
\qquad\quad \left.+\left(2\alpha^{(t,j)}(3+2M)+B^{(t,j)}/\mu\right)B^{(t,j)}\right.\\ \qquad\qquad\qquad+\left. 6\alpha^{(t,j)}\sigma^2\right]
\end{array}
\]
Noting that the assumption on the Theorem implies that the term involving the objective value difference is negative, we obtain the statement of the main result.
\end{proof}

\subsection{Nonconvex Objectives}
\begin{proof}of Theorem~\ref{th:convnonconvex}
As standard, we begin by applying the Descent Lemma across subsequent
averaging steps. 
\[
\begin{array}{l}
f(\theta^{(t+1,0)})-f(\theta^{(t,0)})\le \left\langle \nabla f(\theta^{(t,0)}),\theta^{(t+1,0)}-\theta^{(t,0)}\right\rangle \\\qquad\qquad +\frac{L}{2}\|\theta^{(t+1,0)}-\theta^{(t,0)}\|^2 \\
\quad \le -\left\langle \nabla f(\theta^{(t,0)}),\frac{1}{Q}\sum\limits_{k\in S^{(t)}}\sum\limits_{j=0}^J \alpha^{(t,j)} g^{(t,j)}_k\right\rangle \\
\qquad +\frac{L}{2}\left\|\frac{1}{Q}\sum\limits_{k\in S^{(t)}}\sum\limits_{j=0}^J \alpha^{(t,j)}  g^{(t,j)}_k\right\|^2
\end{array}
\]
Now, we consider the discrepancy of $g^{(t,j)}_k$ to $\nabla f(\theta^{(t,0)})$
to obtain a perturbation from the decrease we expect to get, that we wish to eventually bound relative to said decrease. Specifically, taking total expectations (and implicitly using the tower property):
\vspace{-5mm}
\[
\begin{array}{l}
\mathbb{E}\left[\frac{1}{Q}\sum\limits_{k\in S^{(t)}}\sum\limits_{j=0}^J g^{(t,j)}_k\right] \\ =  \frac{1}{Q}\sum\limits_{k\in S^{(t)}}\sum\limits_{j=0}^J\alpha^{(t,j)} \mathbb{E}\left[\nabla f(\theta^{(t,0)})+b_k^{(t,0)}(\theta^{(t,0)})\right.\\ \qquad 
-f(\theta^{(t,0)})-b_k^{(t,0)}(\theta^{(t,0)})+\nabla f(\theta^{(t,0)},\xi^{(t,j)}_k) \\
\qquad - \left.\nabla f(\theta^{(t,0)},\xi^{(t,j)}_k) +\nabla f(\theta^{(t,j)},\xi^{(t,j)}_k) \right] \\
= \frac{1}{Q}\sum\limits_{k\in S^{(t)}}\sum\limits_{j=0}^J\alpha^{(t,j)} \left[\nabla f(\theta^{(t,0)})
\right. \\
\qquad \left. -\mathbb{E}\left[\nabla f(\theta^{(t,0)},\xi^{(t,j)}_k) +\nabla f(\theta^{(t,j)},\xi^{(t,j)}_k)\right]\right]
\end{array}
\]
and so, combining the previous two sets of equations,\vspace{-5mm}
\[
\begin{array}{l}
f(\theta^{(t+1,0)})-f(\theta^{(t,0)})\le -\frac{\sum\limits_{j=0}^J \alpha^{(t,j)}}{Q} \|\nabla f(\theta^{(t,0)})\|^2 \\
\qquad + \frac{\sum\limits_{j=0}^J\left(\alpha^{(t,j)}\sum\limits_{l=j}^J \alpha^{(t,l)}\right)}{Q}L G^2 \\ 
\qquad + \frac{\sum\limits_{j=0}^J\left(\alpha^{(t,j)}\right)^2L G^2}{2Q}
\end{array}
\]
from which we obtain the final result.
\end{proof}

\section{Effect of amount of data on clients end} \label{Effect-of-amount-of-data}
In this section, we are interested in understanding whether the previous conclusions we made for CIFAR10 generalize to both high and low data regimes on the client's end. In particular, we divide the larger dataset into multiples of the number of clients and randomly assign $M$ of those data partitions to the $M$ clients. The larger the number of partitions, the smaller the amount of data on each of the clients. As can be seen from Fig.~\ref{fig:cifar10-partition} the amount of data that each client owns has no relationship with the benefit it gains from curriculum learning. In fact, CL ameliorates the classification accuracy performance equally under both lower and higher data regimes on the clients' end.


\begin{figure}[H]
\centering
\begin{subfigure}{0.48\linewidth}
\includegraphics[width=1\linewidth]{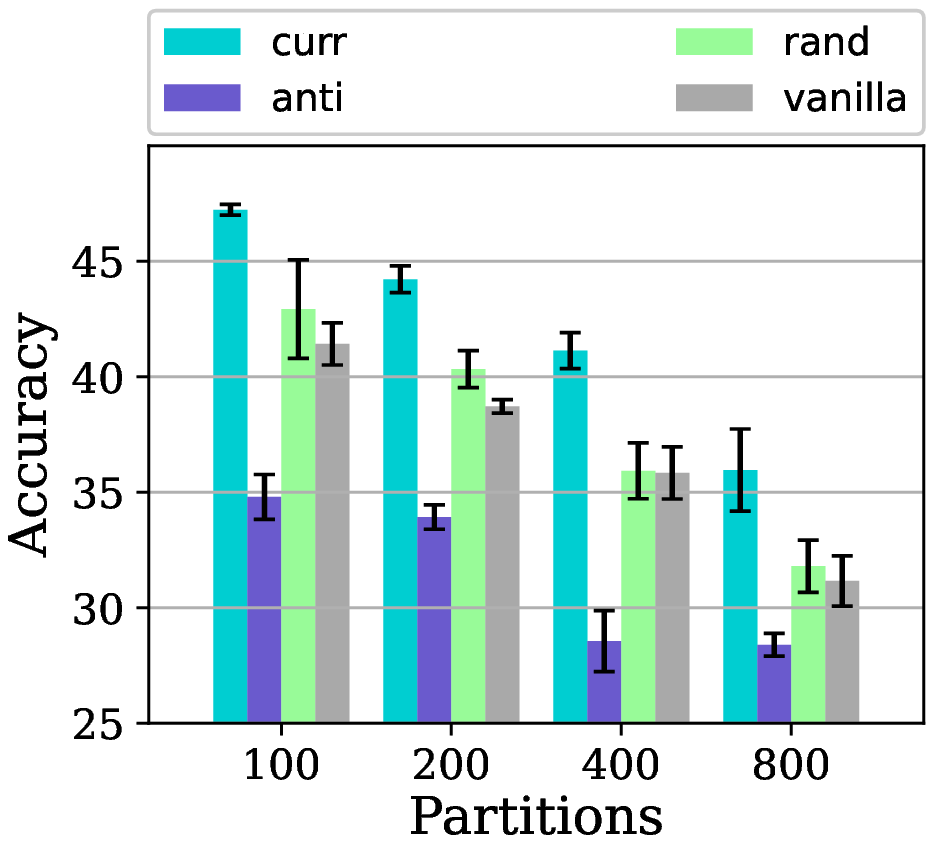}    
\end{subfigure}
\begin{subfigure}{0.48\linewidth}
\includegraphics[width=1\linewidth]{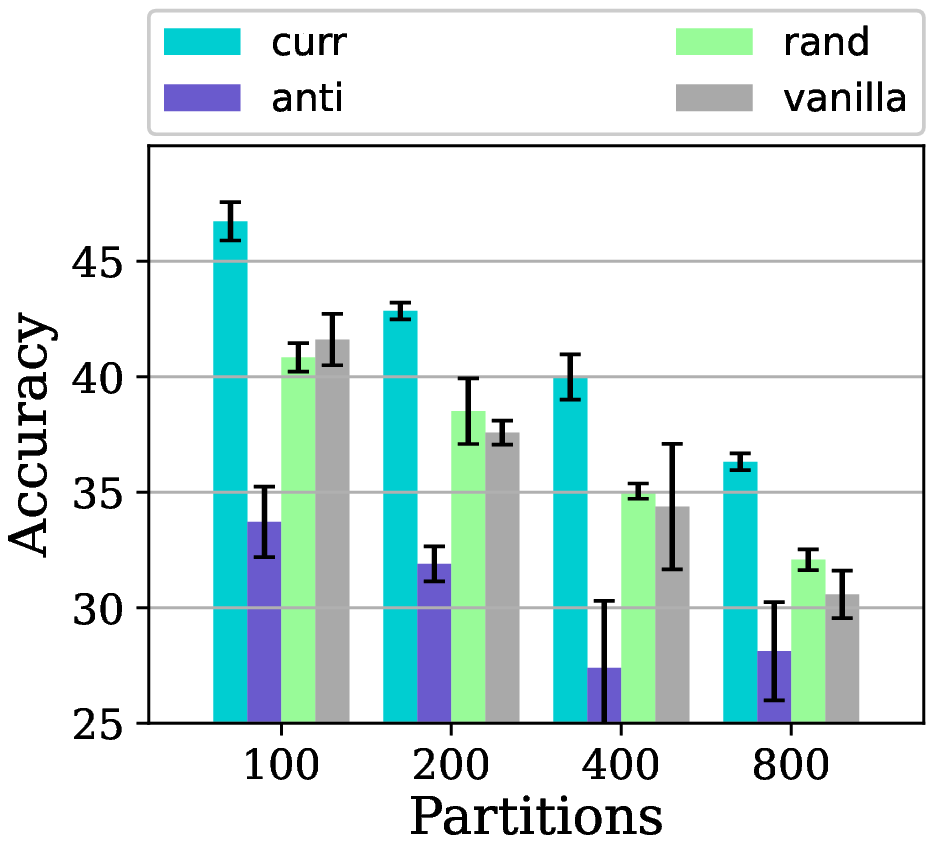}   
\end{subfigure}
\vspace{-4mm}
\caption{\footnotesize \textbf{There is no correlation between the amount of data on the client's end and the benefit they gain from ordered learning. The accuracy decreases when the amount of data each client owns is reduced, but it gains the same amount of benefit from curriculum learning with more data.}  Evaluating the impact of the amount of data each client owns on the accuracy when the clients employ curriculum, anti-curriculum or random ordering during their local training on CIFAR-10 with Non-IID (2) for FedAVg (left), and with Dir(0.05) for Fedprox (right).  All curricula use the linear pacing functions with a = 0.8 and b = 0.2. Each experiment is repeated three times for a total of 100 communication rounds with ten local epochs, and the mean and standard deviation for global test accuracy are reported.}
\label{fig:cifar10-partition}
\vspace{-4mm}
\end{figure}

\begin{figure*}[htbp]
\centering

\begin{subfigure}{0.23\linewidth}
    \includegraphics[width=1\linewidth]{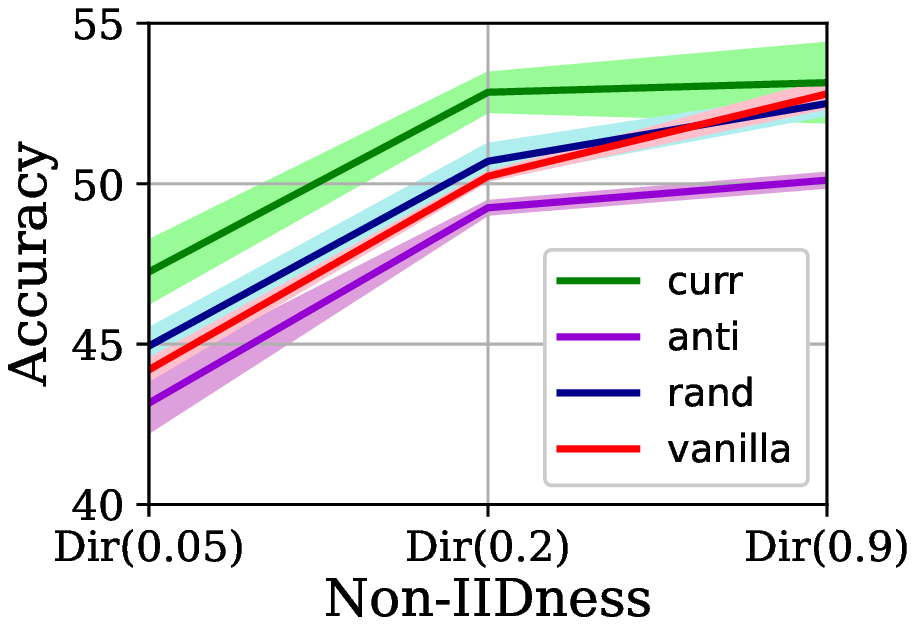}
\end{subfigure}
\begin{subfigure}{0.23\linewidth}
    \includegraphics[width=1\linewidth]{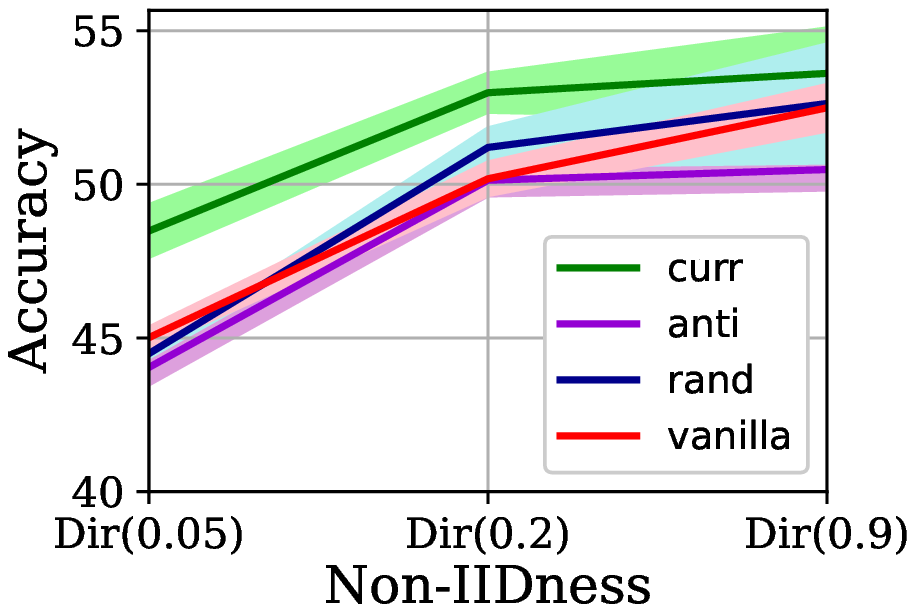}
\end{subfigure}
\begin{subfigure}{0.23\linewidth}
    \includegraphics[width=1\linewidth]{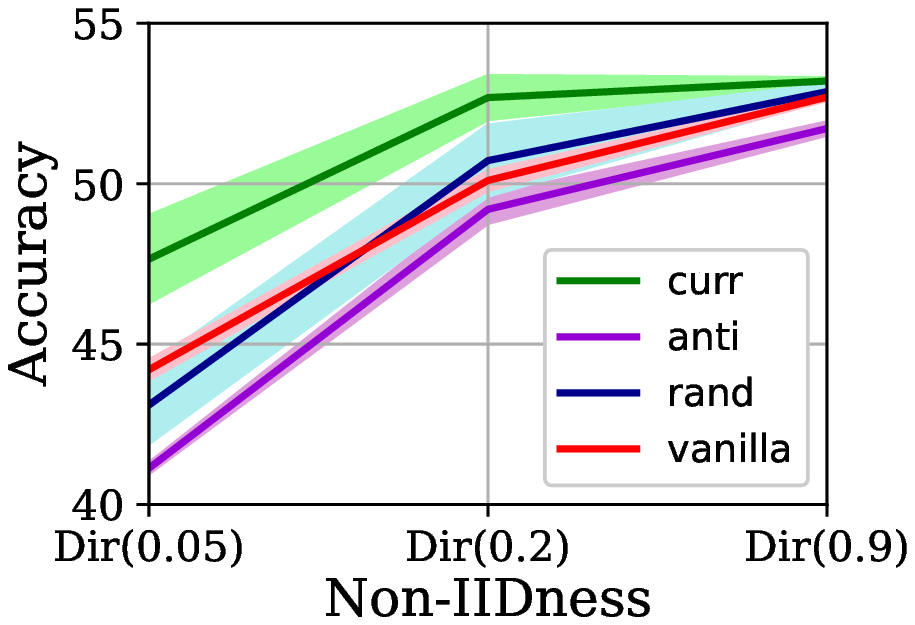}
\end{subfigure}
\begin{subfigure}{0.23\linewidth}
    \includegraphics[width=1\linewidth]{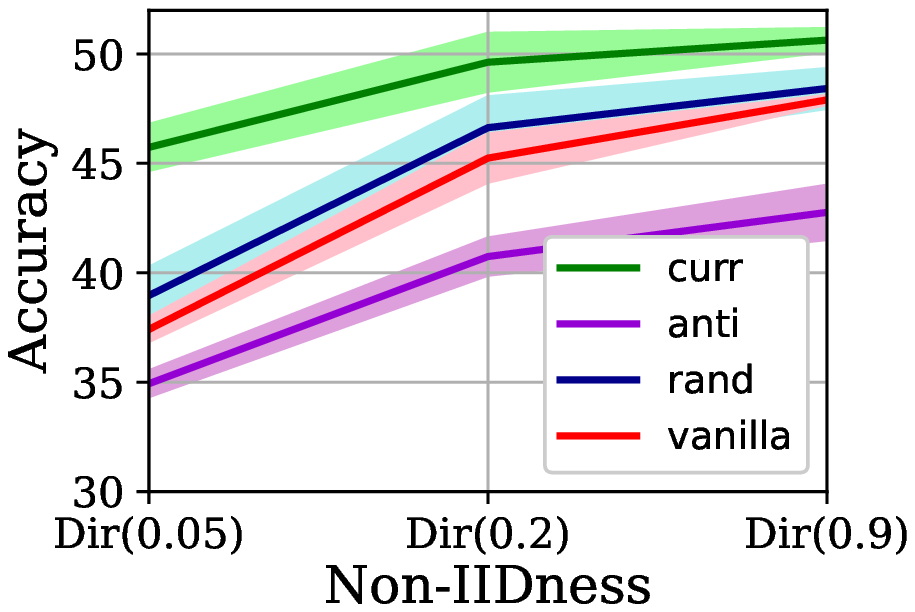}
\end{subfigure}
\vspace{-3mm}
\caption{\footnotesize \textbf{Curriculum-learning helps more when training with more severe data heterogeneity across clients on CIFAR-100.} Test accuracy of different baselines when sweeping from 
extremely Non-IID setting, Dir (0.05) to highly IID setting, Dir(0.9). For each baseline, the average of final global test
accuracy is reported. We run each baseline 3 times
for 100 communication rounds with 10 local epochs. The figures from left to right, are for FedAvg, Fedprox, Scaffold, and FedNova baselines.}
\label{fig:cifar100-niid}

\end{figure*}

\section{Effect of pacing function and its parameters in IID and Non-IID FL} \label{Effect-of-pacing-function-sup}
This subsection complements subsection~\ref{Effect-of-pacing-function} of the main paper, where we evaluated the effect of pacing function and its hyperparameter $a$ when clients train on CIFAR-10 with FedAvg under IID data. Here, we report the results for FedAvg under Non-IID Dir($0.05$).  The conclusion is similar--Fig.~\ref{pacing-a-cifar10-iid-ordering} shows that bigger values of $a$ provide better accuracy performance for most of the pacing function families on both extreme IID and Non-IID setting. It is noteworthy that, the observations generalize to other baselines, as discussed in different sections of the paper.


\begin{figure}[H]
    \includegraphics[width=1\linewidth]{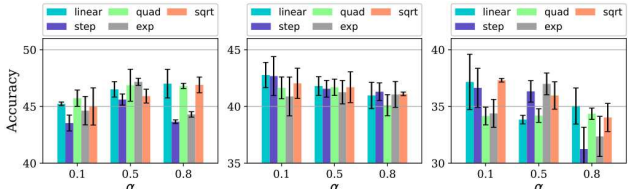}  
    \vspace{-8mm}
\caption{\footnotesize \textbf{ Bigger $a$ values provide better accuracy performance for most of pacing function families and on both IID and Non-IID setting for curriculum learning. But a notable contrast can be seen with random-/anti ordering.} The effect of using different pacing function families and their hyperparameter $a$ on the accuracy when the clients employ curriculum, anti-curriculum or random ordering during their local training on CIFAR-10 with Non-IID Dir(0.05) data. The figures from left to right are for curriculum, random, and anti ones.}
\label{pacing-a-cifar10-iid-ordering}
\end{figure}

\section{Effect of level of heterogeneity} \label{cifar-100-effect-of-level-hetero}
This subsection complements subsection~\ref{effect-of-hetero} of the main paper. In this section, we present further experimental results showing the relationship between ordering-based learning and the level of statistical data heterogeneity. Herein, we are interested in investigating whether the previous conclusions we made for CIFAR-10 generalize to other datasets such as CIFAR-100. Fig.~\ref{fig:cifar100-niid}  shows the same trend as in CIFAR-10, i.e., \emph{again, we see that as the data from the clients becomes more heterogeneous, the global model benefits more from curriculum learning, resulting in higher performance accuracy when compared to "vanilla" and "anti-/random" learning. } We provided rigorous analysis to explain this phenomenon.

\section{Related Work}
\label{sec:related-work}
Early CL formulated the easy-to-hard training paradigm in the context of deep learning \cite{bengio-curriculum}. CL determines
a sequence of training instances, which in essence corresponds to a list of samples ranked in ascending order of learning difficulty~\cite{SPL-curriculum-2015}. Samples are ranked according to  per-sample loss~\cite{curriculum-DIH-bilmes-2020}. In the early steps of training, samples with smaller loss (higher score) are selected, and gradually the subset size over time is increased to cover all the training data.  \cite{curriculum-humna-score-2015} proposed to manually sort the samples using human annotators. Self-paced learning (SPL)~\cite{SPL-curriculum-2015} chooses the curriculum based on hardness (e.g., per-sample loss) during training. \cite{curriculum-cscore-2020} proposes using a consistency score (c-score) calculated based on the consistency of a model in correctly predicting a particular example’s label trained on i.i.d. draws of the training set. \cite{curriculum-scoring-2018} determines the difficulty of learning an example by the metric of the earliest training iteration, after which the model predicts the ground truth class for that example in all subsequent iterations.

 \section{Implementation Details}
\label{impelement-detail}


We begin by splitting the dataset into $K$ partitions, and these partitions are distributed among the $N$ clients in the federation. For most experiments $M=100$ and the partitions are constructed with an input Non-IID Dirichlet distribution with parameter $\beta$ and using Algorithm~\ref{alg:partiiton_diff_dist} with $f_{ord} = 0$, unless otherwise specified. The merits of the Algorithm~\ref{alg:partiiton_diff_dist} are detailed in Section~\ref{sec:difficulty_based_partitioning}.

At the client, we use an SGD optimizer for training with an exponentially decaying learning $\eta =  \eta_{0} (1+ \alpha*i)^{-b}$, with parameters $\eta_{0}=0.001$, $\alpha=0.001$, $b=0.75$ and $i$ is the step index, and a momentum $\rho = 0.9$ and weight decay of $\omega = 5*10^{-4}$. The step count $i$ is a parameter local to the clients and is reset at the beginning of each federation round thereby resetting the learning rate back to $\eta_{0}$ for each round of federation. For the ResNet models however, we do not use the exponential decay learning rate and set $b=0$ with $\eta_{0}=0.01$, and weight decay $\omega = 0$, due to our observation that these values empirically work well.

A small batch size of $bs_{data} = 10$ is used on the server. At each client, we use the local epochs $n_{epoch} = 10$, which,
together with the client data partition size, determines the number of local steps at the clients between two global model averaging steps of the federation algorithm. The number of communication rounds of federation is $R = 100$ and the client participation rate is $f = 0.1$, unless otherwise specified. Similarly, when performing client curriculum, we use a client batch size of $bs_{client} = 10$.

Certain federated learning algorithms require additional algorithm specific parameters; these are chosen to match the best values reported by the authors in their respective papers. For reproducibility of the experiments, we seed our random number generator with a seed of $202207$ at the beginning of each experiment. Each experiment consists of $3$ trials, and we report the mean and variance of the results.

{\small
\bibliographystyle{unsrt}
\bibliography{Saeed}
}

\end{document}